\newtheorem{theorem}{Theorem}
\theoremstyle{remark}
\newtheorem {remark}{Remark}
\newcommand{\cmark}{\ding{51}} 
\newcommand{\xmark}{\ding{55}} 
\renewenvironment{abstract}
  {\small\noindent\textbf{Abstract.}\normalsize\ignorespaces}
  {\par\noindent\ignorespacesafterend}
\renewcommand{\maketitle}{%
    \noindent\fcolorbox{red}{white}{
        \parbox{\textwidth}{%
            \color{red}\copyright2025 IEEE. Accepted for \textit{IEEE Transactions on Cognitive and Developmental Systems}. Personal use of this material is permitted. Permission from IEEE must be obtained for all other uses, in any current or future media, including reprinting/republishing this material for advertising or promotional purposes, creating new collective works, for resale or redistribution to servers or lists, or reuse of any copyrighted component of this work in other works.
        }%
    }
    \vspace{1em} 
    \begin{center}%
        {\LARGE \@title \par}
        \vspace{0.5em}
        {\large \@author \par}
        \vspace{0.5em}
        {\large \@date}
    \end{center}%
    \vspace{1em} 
}
\begin{document}

\title{A Minimal Model for Emergent Collective Behaviors in Autonomous Robotic Multi-Agent Systems}

\author{Hossein B. Jond}
\affil{Department of Cybernetics, Czech Technical University in Prague, 12000 Prague, Czechia}

\date{}

\maketitle

\begin{abstract} 
Collective behaviors such as swarming and flocking emerge from simple, decentralized interactions in biological systems. Existing models, such as Vicsek and Cucker-Smale, lack collision avoidance, whereas the Olfati-Saber model imposes rigid formations, limiting their applicability in swarm robotics. To address these limitations, this paper proposes a minimal yet expressive model that governs agent dynamics using relative positions, velocities, and local density, modulated by two tunable parameters: the spatial offset and kinetic offset. The model achieves spatially flexible, collision-free behaviors that reflect naturalistic group dynamics. Furthermore, we extend the framework to cognitive autonomous systems, enabling energy-aware phase transitions between swarming and flocking through adaptive control parameter tuning. This cognitively inspired approach offers a robust foundation for real-world applications in multi-robot systems, particularly autonomous aerial swarms.
 
\textbf{Keywords:} Cognitive multi-agent systems, collective behavior, collision avoidance, flocking and swarming, swarm robotics. 
\end{abstract}

\section{Introduction}
Collective behavior in biological systems, from bacteria to vertebrates, exhibits striking similarities in emergent global patterns and information transfer, as observed in bird flocks~\cite{CAVAGNA20181}, insect swarms~\cite{Wang2786790}, and fish schools~\cite{Ito}. These phenomena arise from simple, decentralized interactions among entities within a limited range.
Reynolds~\cite{Reynolds} introduced three local interaction rules: separation, alignment, and cohesion, in his \textit{boids} model, which simulates realistic collective motion like flocking and predator evasion.

Subsequent efforts have mathematically modeled collective behaviors. The notable models include the Vicsek model~\cite{Vicsek}, zone-based interaction model~\cite{COUZIN20021}, Olfati-Saber model~\cite{Olfati-Saber}, and Cucker-Smale model~\cite{Cucker-Smale}. Many variations of the Vicsek model for particle systems have been proposed to mimic the properties of natural creatures~\cite{PhysRevLett.114.168001,PhysRevLett.127.258001,PhysRevE.100.062415,ZHOU2024133983}. The Vicsek model describes self-propelled particles moving at a constant velocity, aligning their direction with the average orientation of nearby particles, and transitioning from disorder to order by tuning angular noise and density terms. The zone-based interaction model organizes local interactions into three zones: repulsion, orientation, and attraction, enabling the replication of collective behaviors through zone adjustments. Some variants of this model have been proposed to control swarm motion~\cite{10375747,Jond2025}. Olfati-Saber introduced three types of agents: the \(\alpha\)-agents (cooperative agents), \(\beta\)-agents (obstacles), and \(\gamma\)-agents (leaders). Pairwise potential-based control laws are designed for \(\alpha\)-agent interactions to form \(\alpha\)-lattices, typically hexagonal, and to manage interactions with obstacles and leaders. The \(\alpha\)-lattice flock formation has gained significant attention in control applications~\cite{9998071,10644765,7112591}. The Cucker-Smale model focused on the alignment of velocity influenced by relative positions, where the alignment strength is dependent on distance. This model has been the subject of extensive study, with numerous extensions and variants proposed~\cite{S0218202516500287,CHOI201849,Choi2017}.

Nevertheless, the interaction rules in these models and their variations often fail to replicate the properties that make Reynolds' \textit{boids} excel in producing complex collective behaviors. For instance, the Vicsek model, zone-based interaction model, and Cucker-Smale model lack critical features such as flock centering, inter-agent collision avoidance, or obstacle avoidance. Meanwhile, the Olfati-Saber model tends to produce rigid and overly regular patterns, missing the flexibility inherent in the \textit{boids} model. 

The ability to avoid collisions and maintain personal space is fundamental in both biological and robotic swarms. Classical models do not inherently address this aspect of collective motion without incorporating complex repulsive potentials, thereby compromising their minimalism. This highlights a research gap for innovative models that effectively incorporate collision avoidance while preserving simplicity and computational efficiency.

The advantages of studying collective behavior can be considered to be twofold. This helps explain many natural phenomena and physical information transfer mechanisms. Additionally, it inspired the development of three types of swarm robots~\cite{10.1093/nsr/nwad040}, including aerial swarms, such as swarming drones~\cite{scirobotics.abm5954,8594266} and flocking drones~\cite{noauthor_optimized_nodate,10220161}, ground swarms, such as mini-wheeled robots~\cite{STOLFI2024107501}, marine swarms, such as robot fish~\cite{scirobotics.abd8668}, and combined swarms, such as aerial-ground swarms~\cite{scirobotics.adl5161}.

This study presents a minimal mathematical model designed to replicate natural swarming and flocking behaviors in robotic multi-agent systems. The velocity dynamics of each agent are shaped by local interactions that incorporate relative position, relative velocity, and local density. These decentralized interactions give rise to diverse emergent behaviors, ranging from highly coordinated flocking to disordered swarming, depending on the values of key control parameters. The velocity dynamics are modulated by two tunable parameters: the kinetic offset and the spatial offset. The kinetic offset governs the degree of alignment between neighboring agents’ velocities, enabling a continuum of behavior from disordered swarming to ordered flocking. The spatial offset regulates inter-agent spacing, implicitly supporting collision avoidance by maintaining personal space among agents. The model’s distributed structure ensures scalability and robustness, making it well-suited for swarm robotics applications. Simulation results demonstrate the model's capacity to reproduce a wide range of collective motion patterns, including swarming, vortexing, and flocking. Furthermore, a comprehensive analysis is conducted to examine the influence of control parameter variations on the emergence of collective behaviors across different population sizes. The results demonstrate that the model exhibits a continuum phase transition, particularly capturing the gradual shift between flocking and swarming. This paper’s primary contribution is a minimal model that generates spatially flexible, collision-free swarming and flocking behaviors emerging from simple local interactions. The model overcomes the limited realism of the Vicsek and Cucker-Smale models, which lack collision avoidance, and the rigid flocking of the Olfati-Saber model, thereby enhancing its applicability to swarm robotics.

Furthermore, the model is extended to incorporate energy-aware adaptation, allowing agents to dynamically adjust their spatial and kinetic offsets based on their individual energy levels. Agents with ample energy reserves maintain higher kinetic offsets, favoring swarming behavior characterized by sustained motion and decentralized exploration. In contrast, agents with diminished energy reduce both offsets, transitioning toward flocking behavior that promotes energy efficiency through compact coordinated movement and reduced relative velocities. This adaptive mechanism captures biologically inspired trade-offs between cooperative group dynamics and individual energy constraints. By embedding such resource-aware decision-making, the model advances the realism and applicability of cognitive multi-robot systems in scenarios where long-term autonomy and energy efficiency are critical—such as UAV-based shepherding~\cite{9173524} and dynamic target encirclement~\cite{9050630}.

The remainder of this paper is organized as follows. Section~\ref{sec:col-beh} provides an overview of emergent collective behaviors and models. Sections~\ref{sec:model},~\ref{sec:model-cluttered}, and~\ref{sec:model-ext} describe the model and its extensions incorporating target-directed and obstacle avoidance potentials to enable navigation toward global objectives in cluttered environments, and resource-aware cognitive extension, respectively. Section~\ref{sec:global} examines the asymptotic stability of the proposed framework. Section~\ref{sec:sim} provides simulation results along with model analyses. Finally, Section~\ref{sec:con} offers concluding remarks and outlines potential directions for future work.

\section{Emergent Collective Behaviors and Models}\label{sec:col-beh}

Flocking, vortexing, and swarming are emergent behaviors in biological groups, driven by simple local interaction rules. Table~\ref{tab:swarm_flock} summarizes their characteristics.

\begin{table}[t]
\centering
\caption{Characteristics of flocking, vortexing, and swarming.}
\begin{tabular}{lccc}
\toprule
Aspect & Flocking & Vortexing & Swarming \\
\midrule
Coordination & High & Moderate & Low \\
Alignment & Strong & Minimal & Minimal \\
Emergent Behavior & Cohesion & Spiral & Aggregation \\
Organization & Ordered & Structured disorder & Disordered \\
Biological Goals & Migration & Protection & Foraging \\
Robotic Goals & Navigation & Rotational movement & Exploration \\
\bottomrule
\end{tabular}
\label{tab:swarm_flock}
\end{table}

These behaviors arise from rules based on position, velocity, and local density, balancing attraction, repulsion, and alignment~\cite{KOLOKOLNIKOV20131}. Flocking involves strong alignment and aggregation for coordinated movement, as in migrating birds~\cite{BAJEC2009777}. Vortexing features agents circling a center, balancing local structure and global disorder for protection~\cite{Delcourt}. Swarming emphasizes aggregation without alignment, enabling flexible responses for foraging or evasion~\cite{Chakraborty2020-tm}.

Reynolds’ \textit{boids} model~\cite{Reynolds} uses collision avoidance, velocity matching, and flock centering. Olfati-Saber’s model~\cite{Olfati-Saber} formalizes these but yields rigid lattices, unlike \textit{boids}’ flexibility. Vicsek~\cite{Vicsek} and Cucker-Smale~\cite{Cucker-Smale} models enable flocking but lack collision avoidance. Our model achieves flexible, collision-free swarming and flocking, overcoming Vicsek and Cucker-Smale’s limitations and Olfati-Saber’s rigidity, enhancing swarm robotics applicability. Table~\ref{tab:s} compares these models.

\begin{table}[b]
\centering
\begin{threeparttable}
\caption{Emergent behaviors of models.}
\label{tab:s}
\begin{tabular}{lccc}
\toprule
Model & Flocking & Swarming & Collision Avoidance \\
\midrule
Reynolds\tnote{1} & \cmark & \xmark & \cmark \\
Vicsek & \cmark & \cmark & \xmark \\
Cucker-Smale & \cmark & \xmark & \xmark \\
Olfati-Saber\tnote{2} & \cmark & \xmark & \cmark \\
\textbf{Proposed}\tnote{3} & \cmark & \cmark & \cmark \\
\bottomrule
\end{tabular}
\begin{tablenotes}
\item[1]Heuristic, rule-based model.
\item[2]Uses repulsive potentials for strong separation.
\item[3]Employs offset-vector-based repulsion for soft, effective separation.
\end{tablenotes}
\end{threeparttable}
\end{table}

Consider $n$ interacting agents ($n \geq 2$) navigating an $m$-dimensional space, where $m = 2$ or $3$. For each agent indexed by $i = 1, \ldots, n$, the position and velocity vectors are denoted by $\mathbf{p}_i \in \mathbb{R}^m$ and $\mathbf{v}_i \in \mathbb{R}^m$, respectively. The Cucker-Smale model describes the flocking behavior through velocity alignment dynamics. Specifically, each agent adjusts its velocity by incorporating a weighted average of the velocity differences between itself and other agents. The Cucker-Smale model is governed by
\begin{align}\label{eq:CS-model}
    \frac{d\mathbf{p}_i}{dt}&=\mathbf{v}_i, \notag\\
    \frac{d\mathbf{v}_i}{dt}&=\sum_{j=1}^n\omega(\lVert \mathbf{p}_j-\mathbf{p}_i\rVert)(\mathbf{v}_j-\mathbf{v}_i),
\end{align}
where the weight function $\omega$ is defined as
\[
\omega(\lVert \mathbf{p}_j-\mathbf{p}_i\rVert)=\frac{K}{(\sigma^2+\lVert \mathbf{p}_j-\mathbf{p}_i\rVert)^\gamma},
\]
with constants $K, \sigma > 0$, $\gamma \geq 0$, and $\lVert \cdot \rVert$ represents the Euclidean norm. The weight function depends on the distance between agents, influencing the strength of their velocity alignment.

Extensions of the Cucker-Smale model address its lack of collision avoidance. For instance, interparticle bonding forces were introduced to ensure collision-free flocking~\cite{Park2010}, while sharp conditions for avoiding collisions in singular Cucker-Smale interactions were established for general initial configurations~\cite{Carrillo2017}. Notably, collision avoidance without additional forces was proven for some singular short-range-communicated Cucker-Smale models~\cite{Yin2020}. 

\begin{remark}
The proposed model may be viewed as a generalization of the Cucker-Smale flocking model, incorporating several notable distinctions. First, while the Cucker-Smale model primarily emphasizes alignment-based flocking behavior, our model introduces a spatial offset tuning parameter that regulates inter-agent distances, thereby implicitly promoting collision avoidance. Second, in contrast to the Cucker-Smale model's distance-based weight function for velocity alignment, our model employs a relative speed-based influence function, governed by the kinetic offset parameter. A low kinetic offset yields flocking via velocity consensus, while a high offset enables swarming through sustained velocity differences.
\end{remark}

\begin{remark}
The kinetic offset functions similarly to noise in the Vicsek model: smaller values promote alignment, like low noise, while larger values allow disordered motion, akin to high noise levels. 
\end{remark}

\section{The Proposed Collective Behavior Model}\label{sec:model}

Our model comprises $n$ interacting agents ($n \geq 2$) navigating an $m$-dimensional space. The neighborhood of agent $i$ is defined as $\mathcal{N}_i = \{j \mid j \neq i \text{ and } \lVert \mathbf{p}_j - \mathbf{p}_i \rVert \leq r_i\}$, where $r_i > 0$ denotes the interaction radius of agent $i$.

The agent dynamics are governed by the following interaction model
\begin{align}\label{eq:model}
    \frac{d\mathbf{p}_i}{dt}&=\mathbf{v}_i, \notag\\
    \frac{d\mathbf{v}_i}{dt}&=\underbrace{\sum_{j\in\mathcal{N}_i}\psi(\lVert \mathbf{p}_j-\mathbf{p}_i\rVert)(\mathbf{p}_j-\mathbf{p}_i)}_{\text{aggregation potential}}\notag\\&+\underbrace{\sum_{j\in\mathcal{N}_i}\phi(\lVert \mathbf{v}_j-\mathbf{v}_i\rVert)(\mathbf{v}_j-\mathbf{v}_i)}_{\text{alignment potential}},
\end{align}
where the interaction weights are defined as
\begin{align*}
    &\psi(\lVert \mathbf{p}_j-\mathbf{p}_i\rVert)=1-(\frac{\delta_i |\mathcal{N}_i|}{\lVert \mathbf{p}_j-\mathbf{p}_i\rVert})^{\alpha_i},\\
    &\phi(\lVert \mathbf{v}_j-\mathbf{v}_i\rVert)=1- (\frac{\eta_i}{|\mathcal{N}_i|\lVert \mathbf{v}_j-\mathbf{v}_i\rVert})^{\beta_i},
\end{align*}
where \(\delta_i,\eta_i \geq 0\) are the spatial and kinetic offsets, respectively, \(|\mathcal{N}_i|\) denotes the cardinality of the set \(\mathcal{N}_i\), and \(\alpha_i,\beta_i>0\) control the steepness of the response.

The model balances aggregation (cohesion-separation) and alignment potentials. The spatial offset $\delta_i$ regulates inter-agent spacing, promoting denser formations when small and looser spacing when large, implicitly ensuring collision avoidance. The kinetic offset $\eta_i$ controls velocity alignment: smaller values enhance consensus for flocking, while larger values allow velocity diversity for swarming or vortexing. Offsets $\delta_i$ and $\eta_i$ tune transitions between flocking and swarming, with $\alpha_i$ and $\beta_i$ adjusting interaction sensitivity. Weight function profiles for $\psi$ and $\phi$ are shown in Figs.~\ref{fig:weights}(\subref{fig:psi}) and~\ref{fig:weights}(\subref{fig:phi}). Table~\ref{tab:parameter_effects} summarizes their effects.

\begin{figure}[t]
   \centering
   \begin{subfigure}{0.475\linewidth}
       \centering
       \includegraphics[width=\linewidth]{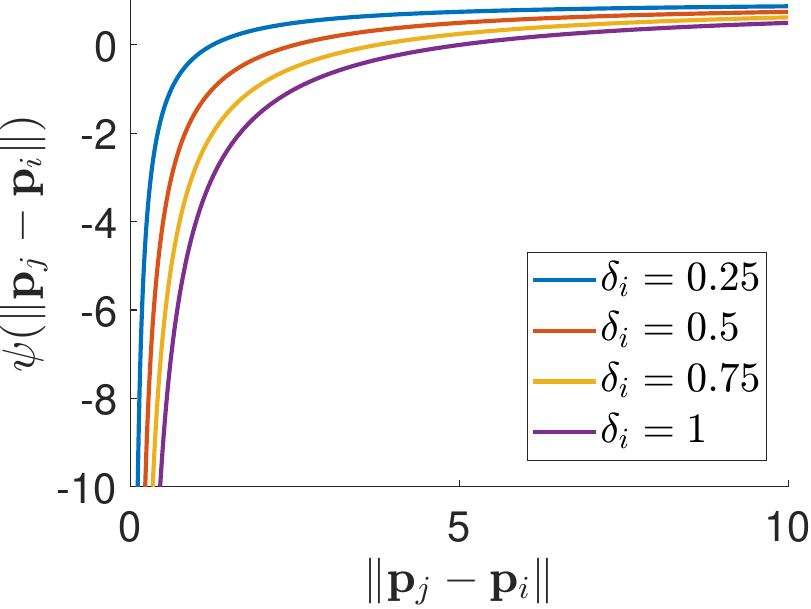}
       \subcaption[]{} \label{fig:psi}
   \end{subfigure}
   \begin{subfigure}{0.475\linewidth}
       \centering
       \includegraphics[width=\linewidth]{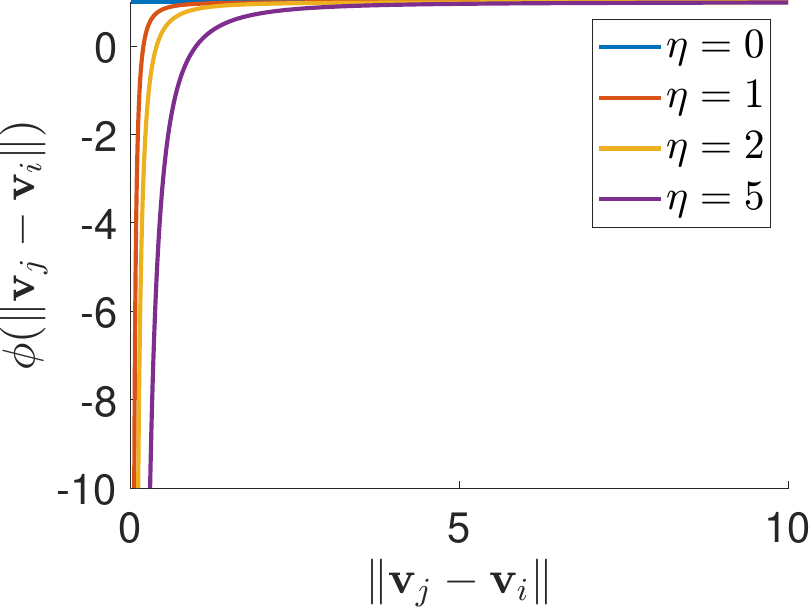}
       \subcaption[]{} \label{fig:phi}
   \end{subfigure}
   \caption{(a) Aggregation weight \(\psi\) for \(\delta_i = 0.25, 0.5, 0.75, 1\), \(\alpha = 1\), \(|\mathcal{N}_i| = 5\). Positive \(\psi\) induces cohesion, negative \(\psi\) induces separation, zero at \(\lVert \mathbf{p}_j - \mathbf{p}_i \rVert = \delta_i |\mathcal{N}_i|\). (b) Alignment weight \(\phi\) for \(\eta_i = 0, 1, 2, 5\), \(\beta = 2\), \(|\mathcal{N}_i| = 5\). Positive \(\phi\) promotes alignment, negative \(\phi\) reduces alignment, zero at \(\lVert \mathbf{v}_j - \mathbf{v}_i \rVert = \frac{\eta_i}{|\mathcal{N}_i|}\).}
   \label{fig:weights}
\end{figure}

\begin{table}[t]
\centering
\caption{Effects of model parameters on collective dynamics.}
\begin{tabular}{cp{6.5cm}}
\toprule
Parameter & Effect on Collective Behavior \\
\midrule
$\delta \downarrow$ & Denser aggregation (flocking) \\
$\delta \uparrow$ & Looser spacing (swarming) \\
$\eta \downarrow$ & Stronger velocity alignment (flocking) \\
$\eta \uparrow$ & Velocity misalignment (swarming) \\
$\alpha, \beta \uparrow$ & Steeper interaction responses \\
$\alpha, \beta \downarrow$ & Smoother interaction responses \\
\bottomrule
\end{tabular}
\label{tab:parameter_effects}
\end{table}

In the context of robotic swarms, it is imperative that the velocity of each agent is restricted within its designated physical limits to prevent saturation, a condition in which a control input exceeds the operational limits of the system. To address this within the proposed model, a smooth saturation mechanism based on the hyperbolic tangent function is employed. This function effectively constrains the velocity magnitude without abrupt clipping, thereby preserving continuity in the control signal and avoiding destabilizing discontinuities. Accordingly, the velocity is regulated as follows,
\[
   \mathbf{v}_i = v_i^{\max} \tanh{\left(\frac{\lVert \mathbf{v}_i \rVert}{v_i^{\max}}\right)} \frac{\mathbf{v}_i}{\lVert \mathbf{v}_i \rVert}, 
\]
where \( v_i^{\max} \) is the maximum speed for the agent \(i\). If the computed velocity magnitude exceeds the agent's limits, it is smoothly capped at \( v_i^{\max} \). The saturation value \(s_i\) for the magnitude of the velocity rate is determined based on the time to reach the maximum speed, 
\[
s_i = \frac{v_i^{\max}}{t_i^{v_{\max}}},
\]
where \( t_i^{v_{\max}} \) is the time required to reach \( v_i^{\max} \).  

Consider the expanded formulation of the interaction model
\begin{align*}
    \frac{d\mathbf{p}_i}{dt}&=\mathbf{v}_i, \notag\\
    \frac{d\mathbf{v}_i}{dt}&=\sum_{j\in\mathcal{N}_i}(\mathbf{p}_j-\mathbf{p}_i-\frac{(\delta_i |\mathcal{N}_i|)^\alpha}{\lVert \mathbf{p}_j-\mathbf{p}_i\rVert^{\alpha-1}}\frac{\mathbf{p}_j-\mathbf{p}_i}{\lVert \mathbf{p}_j-\mathbf{p}_i\rVert})\notag\\&+\sum_{j\in\mathcal{N}_i}(\mathbf{v}_j-\mathbf{v}_i- \frac{\eta_i^\beta}{|\mathcal{N}_i|^\beta\lVert \mathbf{v}_j-\mathbf{v}_i\rVert^{\beta-1}}\frac{\mathbf{v}_j-\mathbf{v}_i}{\lVert \mathbf{v}_j-\mathbf{v}_i\rVert}),
\end{align*}
which explicitly decomposes the interactions into directional and magnitude-modulated components. In this formulation, the unit vectors $\frac{\mathbf{p}_j - \mathbf{p}_i}{\lVert \mathbf{p}_j - \mathbf{p}_i \rVert}$ and $\frac{\mathbf{v}_j - \mathbf{v}_i}{\lVert \mathbf{v}_j - \mathbf{v}_i \rVert}$ encode the direction of inter-agent displacement and relative velocity, respectively. These unit directions are scaled by offset-dependent terms $\frac{(\delta_i |\mathcal{N}_i|)^\alpha}{\lVert \mathbf{p}_j - \mathbf{p}_i \rVert^{\alpha - 1}}$ and $\frac{\eta_i^\beta}{|\mathcal{N}_i|^\beta \lVert \mathbf{v}_j - \mathbf{v}_i \rVert^{\beta - 1}}$, which regulate the strength of aggregation and alignment forces based on spatial and kinetic offsets.

The displacement offset vector drives repulsion or attraction, enforcing equilibrium spacing at $\delta_i |\mathcal{N}_i|$, while the velocity offset vector governs alignment, enabling flocking or swarming based on $\eta_i$. Implicit collision avoidance is embedded via the repulsion mechanism, where the interaction weight $\psi(\lVert \mathbf{p}_j - \mathbf{p}_i \rVert)$ becomes strongly negative as $\lVert \mathbf{p}_j - \mathbf{p}_i \rVert \to 0$, pushing agents apart to maintain a minimum separation distance without additional constraints.

The model is scalable, as each agent interacts only with neighbors within $r_i$, keeping computational complexity independent of $n$. Agents adjust behavior based on local neighborhood $\mathcal{N}_i$ and relative positions/velocities. Including $|\mathcal{N}_i|$ in spatial ($\delta_i$) and kinetic ($\eta_i$) offsets adaptively scales neighbor influence by density. The separation equilibrium distance $|\mathcal{N}_i|\delta_i$ increases in crowded settings for more space and decreases in sparse ones for cohesion. This ensures stable, flexible dynamics in large-scale, varying-density environments.

\section{Target-Directed Collective Behavior in Cluttered Environments}~\label{sec:model-cluttered}
To account for migration toward a preferred target location and obstacle avoidance in cluttered environments, we extend the collective behavior model introduced in Section~\ref{sec:model}. These extensions are inspired by natural collectives, where agents orient their motion toward a common goal while avoiding collisions with environmental obstacles. For robotic multi-agent systems, the interaction rules incorporate: ($i$) migration toward a target location, which steers agents toward a shared objective; ($ii$) obstacle avoidance, which prevents collisions with environmental obstacles; and ($iii$) the existing aggregation and alignment interactions, which regulate inter-agent cohesion, separation, and velocity alignment.

The target location, denoted by \(\mathbf{p}_t \in \mathbb{R}^m\), is known to all agents and establishes a common direction for the collective. The environment contains a set of \(O\) obstacles, denoted by \(\mathcal{O} = \{1, \dots, O\}\), with each obstacle \(o \in \mathcal{O}\) located at position \(\mathbf{p}_o \in \mathbb{R}^m\). The set of obstacles detected by agent \(i\) is defined as \(\mathcal{O}_i = \{o \in \mathcal{O} \mid \lVert \mathbf{p}_o - \mathbf{p}_i \rVert \leq c_i\}\), where \(c_i > 0\) is the obstacle detection/interaction radius of agent \(i\).

The extended dynamics of agent \(i\) in a cluttered environment with a target are governed by the following interaction model, which augments the original dynamics in \eqref{eq:model}:
\begin{align}\label{eq:extended_model}
    \frac{d\mathbf{p}_i}{dt} &= \mathbf{v}_i, \notag\\
    \frac{d\mathbf{v}_i}{dt} &= \text{aggregation potential}+ \text{alignment potential} \notag\\
    &\quad + \underbrace{\kappa_i (\mathbf{p}_t - \mathbf{p}_i)}_{\text{target potential}} + \underbrace{\sum_{o \in \mathcal{O}_i} \rho(\lVert \mathbf{p}_o - \mathbf{p}_i \rVert)(\mathbf{p}_o - \mathbf{p}_i)}_{\text{obstacle avoidance potential}},
\end{align}
where \(\kappa_i > 0\) is a constant weight controlling the strength of attraction to the target, and the obstacle avoidance interaction weight is defined as
\[
\rho(\lVert \mathbf{p}_o - \mathbf{p}_i \rVert) = \min(0, \, 1 - ( \frac{c_i}{\lVert \mathbf{p}_o - \mathbf{p}_i \rVert} )^{\sigma_i} ),
\]
with \(\sigma_i > 0\) controlling the steepness of the repulsion response.

The obstacle avoidance interaction weight \(\rho\) is designed to induce repulsion from obstacles, analogous to the repulsion component of the aggregation interaction weight \(\psi\). The function \(\rho\) decreases sharply as the distance \(\lVert \mathbf{p}_o - \mathbf{p}_i \rVert\) approaches zero, generating a strong repulsive force to steer agent \(i\) away from obstacle \(o\). This behavior ensures effective collision avoidance. The profile of \(\rho\) is illustrated in Fig.~\ref{fig:rho}.

\begin{figure}[t]
    \centering
    \includegraphics[width=0.5\linewidth]{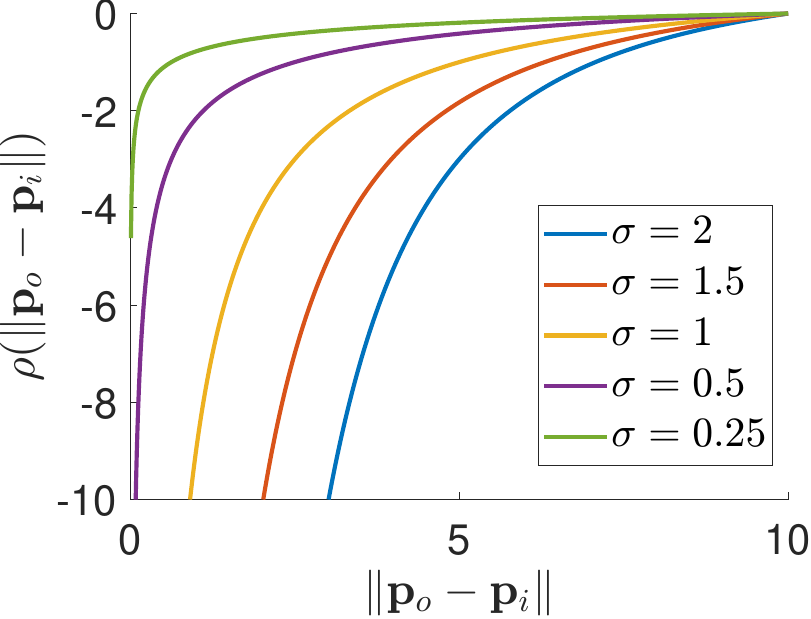}
\caption{Obstacle avoidance weight \(\rho\) for \(\sigma = 2, 1.5, 1, 0.5, 0.25\) at detection/interaction radius \(c_i = 10\).}
\label{fig:rho}
\end{figure}

\section{The Cognitive Adaptation Mechanism}~\label{sec:model-ext}

Cognitive models enable autonomous robotic collectives to adaptively respond to dynamic environments through context-sensitive actions aligned with goals like survival or resource optimization. We frame the transition between flocking and swarming as an emergent cognitive process wherein agents continuously re-evaluate and prioritize competing behavioral imperatives, namely, the maintenance of group cohesion versus the pursuit of environmental exploration. This formulation draws inspiration from natural systems, where animal groups exhibit collective behavioral shifts between flocking during migration for energy efficiency and dispersal during foraging for resource acquisition.

In a robotic multi-agent network, agents endowed with sufficient energy reserves exhibit swarming behavior to perform exploratory or coverage tasks in the environment. Conversely, when energy levels deplete, agents shift their priorities toward resource acquisition, such as navigating to a refueling station, where cohesive movement becomes advantageous and flocking behavior is favored for coordinated navigation. Thus, within a cognitively inspired framework, each agent dynamically adapts its behavior in response to both internal states (e.g., energy availability) and external task demands, resulting in emergent collective adaptation that aligns with group-level objectives.

Assume that the energy expenditure of each agent is expressed as
\[
\frac{dE_i}{dt} = -c_{i1}\lVert\frac{d\mathbf{v}_i}{dt}\rVert^2-c_{i2},\quad E_i(0)=E_{i0},
\]
where \(c_{i1},c_{i2}>0\), \(E_{i0}\) denotes the initial energy level at \(t = 0\), and the squared norm term directly determines the rate of energy consumption.  

For a cognitive energy-driven transition between flocking and swarming behaviors, the individual actions \(\delta_i\) and \(\eta_i\) are defined as functions of the agent's energy level as follows,
\begin{align} 
    &\delta_i =  \delta_{\text{min}} +\frac{\delta_{\text{max}}-\delta_{\text{min}}}{1 + e^{-k_\delta (E_i - E_{\text{th}, i})}}, \label{eq:adaptive_d} \\  
    &\eta_i =  \eta_{\text{min}} + \frac{\eta_{\text{max}}-\eta_{\text{min}}}{1 + e^{-k_\eta (E_i - E_{\text{th}, i})}}, \label{eq:adaptive_e}
\end{align}  
where \(\delta_{\text{min}}\), \(\delta_{\text{max}}\), \(\eta_{\text{min}}\), and \(\eta_{\text{max}}\) denote the respective lower and upper bounds of \(\delta_i\) and \(\eta_i\). The parameters \(k_\delta > 0\) and \(k_\eta > 0\) regulate the rate of change.  

Instead of a fixed threshold, the adaptive threshold \(E_{\text{th}, i}\) is dynamically adjusted based on the energy levels of an agent's neighbors, ensuring that an agent gradually modifies its behavior when a sufficient number of its neighbors experience energy depletion. The adaptive threshold is defined as follows,
\[
E_{\text{th}, i} = E_{\text{th}} - \mu_i(E_{\text{th}} - \min_{j \in \mathcal{N}_i} E_j) ,
\]
with
\[
\mu_i=\frac{\sum_{j \in \mathcal{N}_i} \mathbb{I}(E_j < E_{\text{th}})}{|\mathcal{N}_i|},
\]
where \( \mathbb{I}(E_j < E_{\text{th}}) \) is an indicator function that is 1 if the energy of neighbor \( j \) is below the threshold \( E_{\text{th}} \), and 0 otherwise. The term \(\mu_i\) computes the fraction of neighbors with energy below the threshold. The adaptive threshold \( E_{\text{th}, i} \) is then smoothly adjusted by \(\mu_i\), blending between the \( E_{\text{th}} \) and the lowest energy in the neighborhood. This mechanism ensures that an agent starts adjusting \(\delta_i\) and \(\eta_i\) when at least one of its neighbors has an energy level below the predefined threshold \(E_{\text{th}}\), thereby enabling a smooth and coordinated transition between flocking and swarming behaviors.

Energy levels below \(E_\text{th}\) drive flocking behavior, whereas energy levels above \(E_\text{th}\) induce swarming behavior. When \(E_i\) remains above \(E_\text{th}\), \(\delta_i\) and \(\eta_i\) remain near their maximum values (i.e., \(\delta_{\text{max}}\) and \(\eta_{\text{max}}\)), leading to swarming behavior as agents are encouraged to spread out and explore. On the contrary, as \(E_i\) decreases and drops below \(E_\text{th}\), \(\delta_i\) and \(\eta_i\) approach their minimum values (i.e., \(\delta_{\text{min}}\) and \(\eta_{\text{min}}\)), driving flocking behavior as agents align their velocities and maintain close formations for energy conservation and efficient navigation toward resources. 

Substituting adaptive \(\delta_i\) and \(\eta_i\) in~\eqref{eq:adaptive_d} and~\eqref{eq:adaptive_e} into \(\psi(\lVert \mathbf{p}_j - \mathbf{p}_i \rVert)\) and \(\phi(\lVert \mathbf{v}_j - \mathbf{v}_i \rVert)\), the model in~\eqref{eq:model} becomes indirectly dependent on \(E_i\). The proposed energy-driven cognitive model inherently facilitates seamless transitions between swarming and flocking as a function of fluctuating energy levels. Agents modulate their behavioral strategies by dynamically balancing the competing demands of formation maintenance and environmental exploration, informed by their internal energy states. Within the domain of cognitive robotics, such mechanisms are fundamental to enabling adaptive autonomy, wherein agents make context-sensitive decisions by integrating external stimuli—such as the positions and velocities of neighboring agents—with internal states, including energy reserves and fatigue. This integrative decision-making framework enhances the system's ability to respond to environmental variability while maintaining coherence and task efficiency across diverse operational scenarios.

\begin{remark}
The parameters $c_{i1}$ and $c_{i2}$ control the energy depletion rate of agent $i$, thus influencing the timing of the cognitive behavior shift. Specifically, a higher $c_{i1}$ models faster energy drain during active maneuvering, while a higher $c_{i2}$ captures higher idle power consumption. Consequently, agents with faster energy depletion induce earlier transitions from swarming to flocking at the collective level. This mapping provides a tunable interface between hardware characteristics (e.g., battery capacity, actuator efficiency) and behavior-level adaptation, ensuring the mechanism is applicable to a wide range of robotic systems.
\end{remark}

\subsection{Role-Based Collaboration (RBC) Extension}\label{sec:RBC}
RBC is a computational methodology that uses roles as the fundamental mechanism for coordinating behaviors in multi-agent systems~\cite{Zhu2010,Zhu2015SMC}. Within this framework, agents assume roles that guide their interactions, task execution, and cooperation strategies. The RBC process typically involves role negotiation, qualification evaluation, and dynamic role assignment. It emphasizes structured planning to determine which agent should fulfill a particular role to achieve a shared objective. The roles can be both task-driven (assigned to accomplish objectives) and behavioral descriptors (inferred from emergent patterns). The E-CARGO model (Environments, Classes, Agents, Roles, Groups, and Objects) provides a formal ontology for collaboration~\cite{Zhu2012,Zhu2015RBC}. RBC/E-CARGO has emerged as a growing paradigm for endowing multi-agent systems with structured role assignments and negotiation processes, offering a robust foundation for adaptive coordination in autonomous agent collectives.

The proposed cognitive model currently modulates behavior such as swarming and flocking continuously, based on the energy levels of the agents and the interaction parameters. This continuous adaptation can be mapped to an RBC-style framework either by interpreting emergent behavioral states as latent roles or by explicitly introducing discrete roles that influence the same parameters. Several illustrative mappings are described in the following.

\begin{enumerate}
\item Energy-based roles: Agents can be categorized into role types according to their energy levels. For instance, high-energy agents may adopt an ‘explorer’ or ‘swarm’ role characterized by rapid movement and dispersion, while low-energy agents may assume a ‘flocker’ or ‘conserver’ role, prioritizing cohesion and energy-efficient alignment. In RBC terms, such roles correspond to different objective functions or behavioral constraints, and agents transition between roles in response to changes in their internal energy state.

\item Velocity and spatial roles: Roles may also be defined based on motion patterns and spatial positioning. For example, ‘leader’ roles might be assigned to agents at the forefront of a group trajectory, whereas ‘peripheral’ or ‘vortex’ roles may describe agents moving tangentially or encircling a region. Classification may be based on velocity coherence or spatial distribution, with corresponding interaction rules adapted to each role (e.g., modified alignment weight and attraction/repulsion range through altering kinetic and spatial offsets).

\item Explicit role augmentation: A more direct approach is to augment each agent’s state with a discrete 'role variable'. This variable would then govern the agent’s decision rules. For example, agents could be explicitly assigned roles such as 'flocker', 'swarmer', or 'vortexer', each role prescribing a particular form of the interactions (e.g., different kinetic and spatial offset ranges). Role assignment could be static (pre-assigned) or dynamic (using negotiation or optimization routines from RBC/E-CARGO literature). In an RBC/E-CARGO extension, we could treat each role as a separate Agent Class with its own Role object in the E-CARGO ontology and use established group role assignment (GRA) algorithms to allocate roles based on energy, position, or other agent-specific metrics.

\end{enumerate}

This role-based interpretation provides an alternative perspective on our continuous modulation scheme. For instance, the smooth transition from swarming to flocking can be viewed as an implicit shift in agent roles. Embedding such transitions into the RBC/E-CARGO framework offers a path toward formal role coordination, which is proposed as a direction for future investigation.

\section{Global Stability Analysis}\label{sec:global}

\subsection{Graph-theoretic representation}
The inter-agent interaction topology can be modeled using a directed graph $\mathcal{G}(\mathcal{V}, \mathcal{E})$, where each node in $\mathcal{V}$ represents an agent, and each directed edge $(j, i) \in \mathcal{E}$ denotes that agent $i$ receives information from agent $j$ and $i \neq j$.

The set of incoming neighbors of agent $i$ is defined as $\mathcal{N}_i^{\text{in}} = \{ j \in \mathcal{V} \mid (j, i) \in \mathcal{E} \}$. When interactions are determined by spatial proximity, this set can be expressed as $\mathcal{N}_i = \left\{ j \in \mathcal{V} \mid j \neq i,\ \lVert \mathbf{p}_j - \mathbf{p}_i \rVert \leq r_i \right\}$. 

A node is globally reachable if it can be reached via a directed path from every other node. A graph is connected if it contains at least one such node, referred to as the root of a spanning tree. A spanning tree is a subgraph that connects all nodes with exactly one directed path to the root. 

Let $\mathbf{D} \in \mathbb{R}^{|\mathcal{V}| \times |\mathcal{E}|}$ denote the incidence matrix of $\mathcal{G}$, where each column encodes the direction of an edge: entries are $1$ at the head node, $-1$ at the tail node, and $0$ otherwise. The graph Laplacian is then defined as $\mathbf{L} = \mathbf{D} \mathbf{D}^\top \in \mathbb{R}^{|\mathcal{V}| }$. For systems evolving in $m$-dimensional space, the extended Laplacian is given by $\mathbf{L} \otimes \mathbf{I}_m \in \mathbb{R}^{m|\mathcal{V}|}$ where $\otimes$ denotes the Kronecker product and $\mathbf{I}_m$ is the $m$-dimensional identity matrix~\cite{Olfati-Saber}. $\mathbf{L} \otimes \mathbf{I}_m$ is symmetric and positive semidefinite.

\subsection{Global interaction dynamics}

Define the global position and velocity vectors \(\mathbf{p}=[\mathbf{p}_1^\top,\cdots,\mathbf{p}_n^\top]^\top\in \mathbb{R}^{nm}\) and \(\mathbf{v}=[\mathbf{v}_1^\top,\cdots,\mathbf{v}_n^\top]^\top\in \mathbb{R}^{nm}\), respectively. Let \(\mathbf{p}_{ij}=(\frac{\delta_i |\mathcal{N}_i|}{\lVert \mathbf{p}_j-\mathbf{p}_i\rVert})^\alpha(\mathbf{p}_j-\mathbf{p}_i)\in \mathbb{R}^{m}\) and \(\mathbf{v}_{ij}=(\frac{\eta_i}{|\mathcal{N}_i|\lVert \mathbf{v}_j-\mathbf{v}_i\rVert})^\beta(\mathbf{v}_j-\mathbf{v}_i)\in \mathbb{R}^{m}\). Define \(\mathbf{q}=[\cdots,\mathbf{p}_{ij}^\top,\cdots]^\top\in\mathbb{R}^{m|\mathcal{E}|}\) and \(\tilde{\mathbf{q}}=[\cdots,\mathbf{v}_{ij}^\top,\cdots]^\top\in\mathbb{R}^{m|\mathcal{E}|}\) where the order of the elements corresponds to the order of the edges in \(\mathcal{E}\). We have 
\begin{align}
    &\sum_{j\in\mathcal{N}_i}(\mathbf{p}_j-\mathbf{p}_i-\mathbf{p}_{ij})=-(\mathbf{d}_i\mathbf{D}^\top\otimes \mathbf{I}_m) \mathbf{p}+(\mathbf{d}_i\otimes \mathbf{I}_m)\mathbf{q},\label{eq:sum_p} \\
    &\sum_{j\in\mathcal{N}_i}(\mathbf{v}_j-\mathbf{v}_i-\mathbf{v}_{ij})=-(\mathbf{d}_i\mathbf{D}^\top\otimes \mathbf{I}_m) \mathbf{v}+(\mathbf{d}_i\otimes \mathbf{I}_m)\tilde{\mathbf{q}},\label{eq:sum_v}
\end{align}
where $\mathbf{d}_i$ denote the $i$-th row in $\mathbf{D}$.

Using~\eqref{eq:sum_p} and~\eqref{eq:sum_v}, the global interaction dynamics are obtained as
\begin{align}\label{eq:model-g}
    \frac{d\mathbf{p}}{dt}=&\mathbf{v}, \notag\\
    \frac{d\mathbf{v}}{dt}=&-(\begin{bmatrix}
        \mathbf{d}_1 \\ \vdots \\ \mathbf{d}_n
    \end{bmatrix}\mathbf{D}^\top\otimes \mathbf{I}_m) \mathbf{p}-(\begin{bmatrix}
        \mathbf{d}_1 \\ \vdots \\ \mathbf{d}_n
    \end{bmatrix}\mathbf{D}^\top\otimes \mathbf{I}_m) \mathbf{v}\notag\\&+(\begin{bmatrix}
      \mathbf{d}_1\\ \vdots \\ \mathbf{d}_n
    \end{bmatrix}\otimes \mathbf{I}_m)\mathbf{q}+(\begin{bmatrix}
      \mathbf{d}_1\\ \vdots \\ \mathbf{d}_n
    \end{bmatrix}\otimes \mathbf{I}_m)\tilde{\mathbf{q}}. 
\end{align}

Note that 
\[
(\begin{bmatrix}
      \mathbf{d}_1 \\ \vdots \\ \mathbf{d}_n
    \end{bmatrix}\otimes \mathbf{I}_m)\mathbf{q}=(\begin{bmatrix}
    \mathbf{d}_1\mathbf{W}_1 \\ \vdots \\ \mathbf{d}_n\mathbf{W}_n
\end{bmatrix}\mathbf{D}^\top\otimes \mathbf{I}_m) \mathbf{p},
\]
and
\[
(\begin{bmatrix}
      \mathbf{d}_1 \\ \vdots \\ \mathbf{d}_n
    \end{bmatrix}\otimes \mathbf{I}_m)\tilde{\mathbf{q}}=(\begin{bmatrix}
    \mathbf{d}_1\tilde{\mathbf{W}}_1 \\ \vdots \\ \mathbf{d}_n\tilde{\mathbf{W}}_n
\end{bmatrix}\mathbf{D}^\top\otimes \mathbf{I}_m) \mathbf{v},
\]
 where \( \mathbf{W}_i = \mathrm{diag}(w_i) \in \mathbb{R}^{|\mathcal{E}|} \) and \( w_i = \mathrm{col}(w_e) \) for all \( e \in \mathcal{E} \), \( \tilde{\mathbf{W}}_i = \mathrm{diag}(\tilde{w}_i) \in \mathbb{R}^{|\mathcal{E}|} \) and \( \tilde{w}_i = \mathrm{col}(\tilde{w}_e) \) for all \( e \in \mathcal{E} \), with
\[
w_e = \begin{cases} 
(\frac{\delta_i |\mathcal{N}_i|}{\lVert \mathbf{p}_j-\mathbf{p}_i\rVert})^\alpha, & \text{if } e = (i, j) \text{ and } j \in \mathcal{N}_i, \\
0, & \text{otherwise},
\end{cases}
\]
and
\[
\tilde{w}_e = \begin{cases} 
(\frac{\eta_i }{|\mathcal{N}_i|\lVert \mathbf{v}_j-\mathbf{v}_i\rVert})^{\beta}, & \text{if } e = (i, j) \text{ and } j \in \mathcal{N}_i, \\
0, & \text{otherwise},
\end{cases}
\]
and \( \mathrm{col}\) denotes the column vector. 

Thus, the global interaction dynamics in~\eqref{eq:model-g} are simplified as to
\begin{align}\label{eq:model-g2}
    \frac{d\mathbf{p}}{dt}=&\mathbf{v}, \notag\\
    \frac{d\mathbf{v}}{dt}=&-(\begin{bmatrix}
        \mathbf{d}_1\bar{\mathbf{W}}_1 \\ \vdots \\ \mathbf{d}_n\bar{\mathbf{W}}_n
    \end{bmatrix}\mathbf{D}^\top\otimes \mathbf{I}_m) \mathbf{p}-(\begin{bmatrix}
        \mathbf{d}_1\hat{\mathbf{W}}_1 \\ \vdots \\ \mathbf{d}_n\hat{\mathbf{W}}_n
    \end{bmatrix}\mathbf{D}^\top\otimes \mathbf{I}_m) \mathbf{v},
\end{align}
where \(\bar{\mathbf{W}}_i=(\mathbf{I}-\mathbf{W}_i)\) and \(\hat{\mathbf{W}}_i=(\mathbf{I}-\tilde{\mathbf{W}}_i)\).

Denote
\[
\bar{\mathbf{D}} = \begin{bmatrix}
         \mathbf{d}_1 \bar{\mathbf{W}}_1 \\ \vdots \\  \mathbf{d}_n \bar{\mathbf{W}}_n
    \end{bmatrix} \otimes \mathbf{I}_m, \quad \hat{\mathbf{D}} = \begin{bmatrix}
         \mathbf{d}_1 \hat{\mathbf{W}}_1 \\ \vdots \\ \mathbf{d}_n \hat{\mathbf{W}}_n
    \end{bmatrix} \otimes \mathbf{I}_m,
\]
where both matrices have a structure similar to the incidence matrix with elements scaled by weights embedded in the diagonals of \(\bar{\mathbf{W}}_i\) and \(\hat{\mathbf{W}}_i\) for each node \(i\). 

Finally, the compact form of the global interaction dynamics is given by
\begin{align}\label{eq:global}
     \frac{d\mathbf{p}}{dt}=&\mathbf{v}, \notag\\
    \frac{d\mathbf{v}}{dt}=&-\bar{\mathbf{D}}(\mathbf{D}^\top\otimes \mathbf{I}_m)\mathbf{p} -\hat{\mathbf{D}}(\mathbf{D}^\top\otimes \mathbf{I}_m) \mathbf{v}.
\end{align}

\subsection{Asymptotic stability}

The interplay of repulsive and attractive forces within the proposed interaction framework drives agents toward steady-state positions and velocities. This corresponds to the asymptotic convergence of all edge-based interaction errors to zero, formally expressed as
\begin{equation} \label{eq:requirement}
\mathbf{p}_j-\mathbf{p}_i-\mathbf{p}_{ij}\rightarrow 0,\quad\mathbf{v}_j-\mathbf{v}_i-\mathbf{v}_{ij}\rightarrow 0,
\end{equation}
for all \((j,i)\in\mathcal{E}\) as \(t\rightarrow\infty\).

\begin{theorem}\label{theorem:stability}
Consider a MAS with the global interaction dynamics in~\eqref{eq:global}. Let the corresponding interaction graph \(\mathcal{G}\) form a spanning tree. Then, the edge interaction errors in~\eqref{eq:requirement} asymptotically converge to zero as \(t \to \infty\).
\end{theorem}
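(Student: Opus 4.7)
The plan is to prove convergence of the edge interaction errors by applying LaSalle's invariance principle to an energy-type Lyapunov function, exploiting the connectivity implied by the spanning-tree assumption. The key algebraic fact I would use is that when $\mathcal{G}$ contains a spanning tree, the incidence matrix $\mathbf{D}$ has rank $n-1$, so $(\mathbf{D}^\top \otimes \mathbf{I}_m)$ has kernel exactly $\mathrm{span}\{\mathbf{1}_n \otimes \mathbf{a} : \mathbf{a} \in \mathbb{R}^m\}$. This rank property is what converts convergence of the Laplacian-weighted residuals into pointwise convergence of each individual edge error.

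First, I would rewrite the compact dynamics in \eqref{eq:global} in terms of the natural edge-error variables $\mathbf{e}_p = (\mathbf{D}^\top \otimes \mathbf{I}_m)\mathbf{p} - \mathbf{q}$ and $\mathbf{e}_v = (\mathbf{D}^\top \otimes \mathbf{I}_m)\mathbf{v} - \tilde{\mathbf{q}}$, which are precisely the stacked quantities appearing in \eqref{eq:requirement}. The velocity dynamics then collapse to $\dot{\mathbf{v}} = -(\mathbf{D} \otimes \mathbf{I}_m)(\mathbf{e}_p + \mathbf{e}_v)$. Next I would introduce a Lyapunov candidate $V = \tfrac{1}{2}\mathbf{v}^\top \mathbf{v} + U(\mathbf{p})$, with $U$ assembled edge-by-edge so that $-\nabla_{\mathbf{p}_i} U$ coincides with the per-agent aggregation term $\sum_{j \in \mathcal{N}_i} \psi(\lVert \mathbf{p}_j - \mathbf{p}_i \rVert)(\mathbf{p}_j - \mathbf{p}_i)$. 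Radial integration of $r\psi(r)$ yields a pair potential that is repulsive near contact, reaches a minimum at the equilibrium separation $\delta_i |\mathcal{N}_i|$, and is bounded below on bounded configurations, so $V$ is a legitimate energy-like function.

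Along trajectories the aggregation contribution cancels, leaving $\dot{V} = -\mathbf{v}^\top (\mathbf{D} \otimes \mathbf{I}_m)\mathbf{e}_v$, which upon expanding the Kronecker product becomes a sum of nonnegative edge contributions weighted by $\phi$. This yields monotone non-increase of $V$ and, by standard arguments, boundedness of trajectories. Invoking LaSalle's invariance principle on a bounded sublevel set of $V$ restricts the $\omega$-limit set to $\{\dot{V} = 0\}$, on which $\mathbf{e}_v = \mathbf{0}$ for every active edge, i.e.\ $\mathbf{v}_j - \mathbf{v}_i - \mathbf{v}_{ij} = \mathbf{0}$. Differentiating once more and substituting $\dot{\mathbf{e}}_v = \mathbf{0}$ back into the velocity dynamics gives $(\mathbf{D} \otimes \mathbf{I}_m)\mathbf{e}_p = \mathbf{0}$, and the spanning-tree rank condition on $\mathbf{D}$ collapses this to $\mathbf{e}_p = \mathbf{0}$, delivering the required convergence in \eqref{eq:requirement}.

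The main obstacle I foresee is constructing $U$ rigorously, since $\psi$ depends on the instantaneous cardinality $|\mathcal{N}_i|$, which can change discontinuously as agents enter or leave each other's interaction radius $r_i$, and the prefactor $\delta_i |\mathcal{N}_i|$ is not symmetric under edge reversal. I would address this either by restricting the analysis to a time window on which the neighborhood graph is fixed (a device standard in Cucker-Smale style arguments) or by leveraging the boundedness from the Lyapunov step to argue that the topology eventually stabilizes, so that $U$ is ultimately defined on a frozen graph. A secondary subtlety is the singularity of $\phi$ as $\lVert \mathbf{v}_j - \mathbf{v}_i \rVert \to 0$, which must be bypassed either by confining trajectories to a regime where this norm stays bounded below, or by noting that the target equilibrium $\lVert \mathbf{v}_j - \mathbf{v}_i \rVert = \eta_i / |\mathcal{N}_i|$ is itself bounded away from zero for generic choices of $\eta_i$.
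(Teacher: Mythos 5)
Your route is genuinely different from the paper's. The paper never builds a potential from $\psi$: it changes variables to $\mathbf{e}=-(\mathbf{D}^\top\otimes \mathbf{I}_m)\mathbf{p}$, treats the interaction weights as frozen coefficients so that the edge dynamics become a second-order system $\ddot{\mathbf{e}}+\mathbf{A}\dot{\mathbf{e}}+\mathbf{B}\mathbf{e}=0$ with $\mathbf{A}=(\mathbf{D}^\top\otimes \mathbf{I}_m)\hat{\mathbf{D}}$ and $\mathbf{B}=(\mathbf{D}^\top\otimes \mathbf{I}_m)\bar{\mathbf{D}}$, and applies LaSalle to the quadratic function $\frac{1}{2}\dot{\mathbf{e}}^\top\dot{\mathbf{e}}+\frac{1}{2}\mathbf{e}^\top\mathbf{B}\mathbf{e}$ after declaring $\mathbf{A}$ and $\mathbf{B}$ positive (semi)definite. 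Your physical-energy functional $\frac{1}{2}\mathbf{v}^\top\mathbf{v}+U(\mathbf{p})$ is closer to the classical flocking literature and, if it went through, would deliver a genuinely nonlinear statement rather than a frozen-coefficient one.

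However, two of the issues you flag as secondary subtleties are in fact fatal to the argument as written. First, the dissipation step: per edge, $\dot V$ contains the term $-\phi(\lVert\mathbf{v}_j-\mathbf{v}_i\rVert)\,\lVert\mathbf{v}_j-\mathbf{v}_i\rVert^2$, and $\phi=1-(\eta_i/(|\mathcal{N}_i|\lVert\mathbf{v}_j-\mathbf{v}_i\rVert))^{\beta}$ is strictly negative whenever $\lVert\mathbf{v}_j-\mathbf{v}_i\rVert<\eta_i/|\mathcal{N}_i|$. This is not a degenerate corner case: sustaining relative speeds near $\eta_i/|\mathcal{N}_i|$ is precisely how the kinetic offset produces swarming, so $\dot V\le 0$ fails for any $\eta_i>0$ and the LaSalle argument collapses. (The paper's proof has the same soft spot, hidden in the assertion that the "Laplacian-alike" $\mathbf{A}$ is positive semidefinite, but your formulation makes the sign failure explicit and unavoidable.) Second, the potential $U$ you need does not exist in general: agent $i$ weights edge $(i,j)$ through $\delta_i|\mathcal{N}_i|$ while $j$ uses $\delta_j|\mathcal{N}_j|$, and the graph is directed, so the aggregation field is not reciprocal and hence is not the gradient of any sum of pair potentials; freezing the topology repairs the discontinuity but not this asymmetry. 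Finally, your last step invokes the wrong kernel: a spanning tree gives $\ker(\mathbf{D}^\top\otimes\mathbf{I}_m)=\mathrm{span}\{\mathbf{1}_n\otimes\mathbf{a}\}$, but the implication $(\mathbf{D}\otimes\mathbf{I}_m)\mathbf{e}_p=\mathbf{0}\Rightarrow\mathbf{e}_p=\mathbf{0}$ requires the cycle space $\ker(\mathbf{D})$ to be trivial, i.e.\ $\mathcal{G}$ must be exactly a tree rather than merely contain one.
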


\begin{proof}
Let $\mathbf{e}=(-\mathbf{D}^\top\otimes \mathbf{I}_m) \mathbf{p}$, so the edge dynamics are
$$
\frac{d\mathbf{e}}{dt} = (-\mathbf{D}^\top\otimes \mathbf{I}_m)\mathbf{v}, \quad
\frac{d^2\mathbf{e}}{dt^2} = (-\mathbf{D}^\top\otimes \mathbf{I}_m) \frac{d\mathbf{v}}{dt}.
$$
 
Substitute $\frac{d\mathbf{v}}{dt}$ from~\eqref{eq:global},
$$
\frac{d^2\mathbf{e}}{dt^2} = -(\mathbf{D}^\top\otimes \mathbf{I}_m)\bar{\mathbf{D}}\mathbf{e} - (\mathbf{D}^\top\otimes \mathbf{I}_m)\hat{\mathbf{D}}\frac{d\mathbf{e}}{dt}.
$$

Let $\mathbf{A} = (\mathbf{D}^\top\otimes \mathbf{I}_m)\hat{\mathbf{D}}$, and $\mathbf{B} = (\mathbf{D}^\top\otimes \mathbf{I}_m)\bar{\mathbf{D}}$, then,
$$
\frac{d^2\mathbf{e}}{dt^2} + \mathbf{A}\frac{d\mathbf{e}}{dt} + \mathbf{B}\mathbf{e} = 0.
$$

Define the candidate Lyapunov function,
$$
V(\mathbf{e}, \frac{d\mathbf{e}}{dt}) = \frac{1}{2} (\frac{d\mathbf{e}}{dt})^\top \frac{d\mathbf{e}}{dt} + \frac{1}{2} \mathbf{e}^\top \mathbf{B} \mathbf{e},
$$
which is positive semi-definite because Laplacian-alike $\mathbf{B}$ is positive semidefinite. Taking the time derivative along system trajectories gives,
$$
\frac{d\mathbf{V}}{dt} = (\frac{d\mathbf{e}}{dt})^\top \frac{d^2\mathbf{e}}{dt^2} + \mathbf{e}^\top \mathbf{B} \frac{d\mathbf{e}}{dt} =(\frac{d\mathbf{e}}{dt})^\top (-\mathbf{A}\frac{d\mathbf{e}}{dt} - \mathbf{B}\mathbf{e}) + \mathbf{e}^\top \mathbf{B} \frac{d\mathbf{e}}{dt}.
$$

Noting that $\mathbf{e}^\top \mathbf{B} \frac{d\mathbf{e}}{dt} = (\frac{d\mathbf{e}}{dt})^\top \mathbf{B} \mathbf{e}$, we simplify,
$$
\frac{d\mathbf{V}}{dt} = -(\frac{d\mathbf{e}}{dt})^\top \mathbf{A}\frac{d\mathbf{e}}{dt}.
$$

Since Laplacian-alike $\mathbf{A}$ is positive semidefinite, it follows that $\frac{d\mathbf{V}}{dt}\leq 0$, establishing stability. Moreover, the largest invariant set where $\frac{d\mathbf{V}}{dt} = 0$ corresponds to $\dot{\mathbf{e}} \in \ker(\mathbf{A})$. If $\mathbf{A}$ is positive definite (e.g., for at least one strictly positive $\eta_i$ per edge and spanning tree topology), then $\dot{\mathbf{e}} \to 0$. From the dynamics, this implies $\frac{d^2\mathbf{e}}{dt^2} \to 0$ and thus $\mathbf{e}^\top \mathbf{B} \mathbf{e} \to 0$, i.e., $\mathbf{e} \to 0$.

Therefore, the edge system is globally asymptotically stable: $\mathbf{e}\to 0$ and $\frac{d\mathbf{e}}{dt}\to 0$ as $t\to\infty$.
\end{proof}

Then, all agents asymptotically achieve their desired relative positions and velocities defined by the offset vectors \(\mathbf{p}_{ij}\) and \(\mathbf{v}_{ij}\). The analysis assumes a fixed topology for tractability, while the model inherently features dynamic neighbor selection, resulting in a time-varying topology. This is addressed by applying the fixed-topology stability result at each time step, treating the current state as a new initial condition, provided connectivity is preserved. Hence, convergence to the desired relative positions and velocities remains valid.

\section{Simulation Results and Insights}\label{sec:sim}
\subsection{Minimal model}
First, we validate the model by showing that it generates various motion patterns under different settings, as illustrated in Fig.~\ref{fig:col_beh}. Emerging collective behaviors are flocking (Fig.~\ref{fig:col_beh}(\subref{fig:col_beh:f})), vortexing (Fig.~\ref{fig:col_beh}(\subref{fig:col_beh:v1})), and swarming (Fig.~\ref{fig:col_beh}(\subref{fig:col_beh:s})). 
\begin{figure*}[htbp]
    \centering
    \begin{subfigure}{0.3\linewidth}
        \centering
        \includegraphics[width=\linewidth]{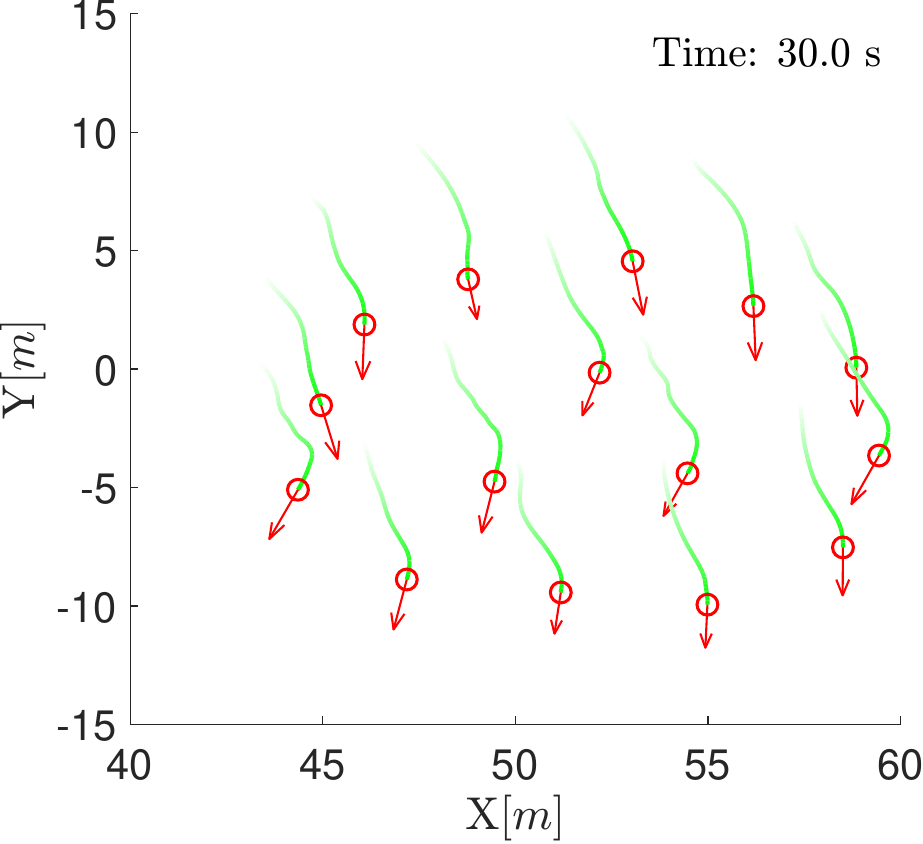}
        \subcaption[]{flocking} \label{fig:col_beh:f}
    \end{subfigure}
    \begin{subfigure}{0.3\linewidth}
        \centering
        \includegraphics[width=\linewidth]{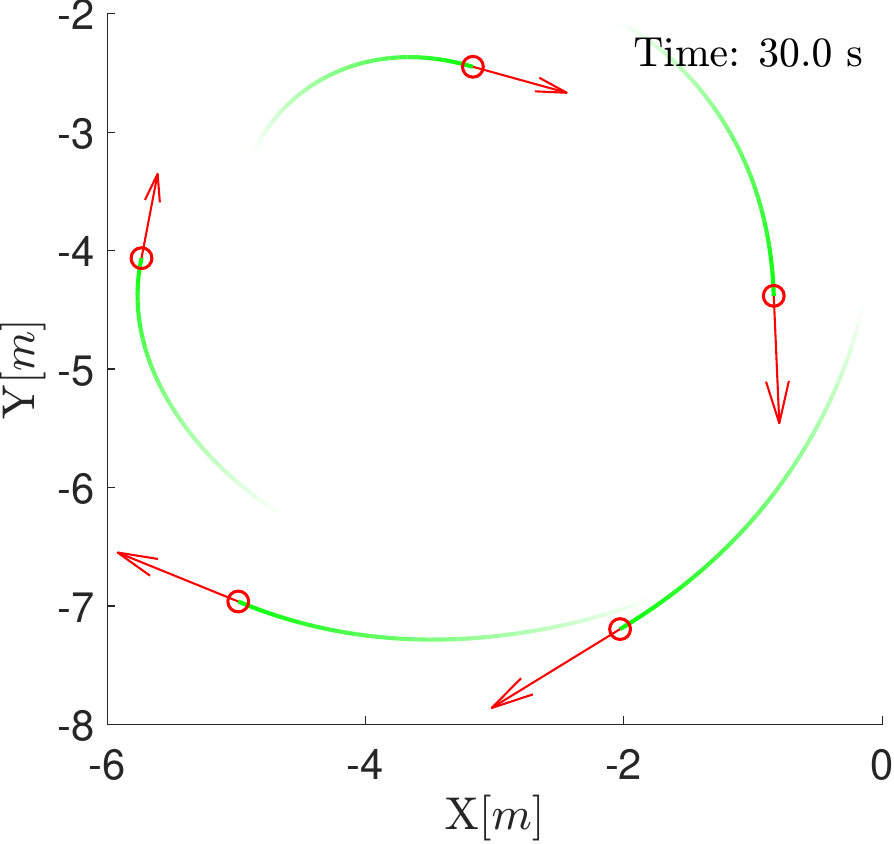}
        \subcaption[]{vortexing} \label{fig:col_beh:v1}
    \end{subfigure}
    \begin{subfigure}{0.3\linewidth}
        \centering
        \includegraphics[width=\linewidth]{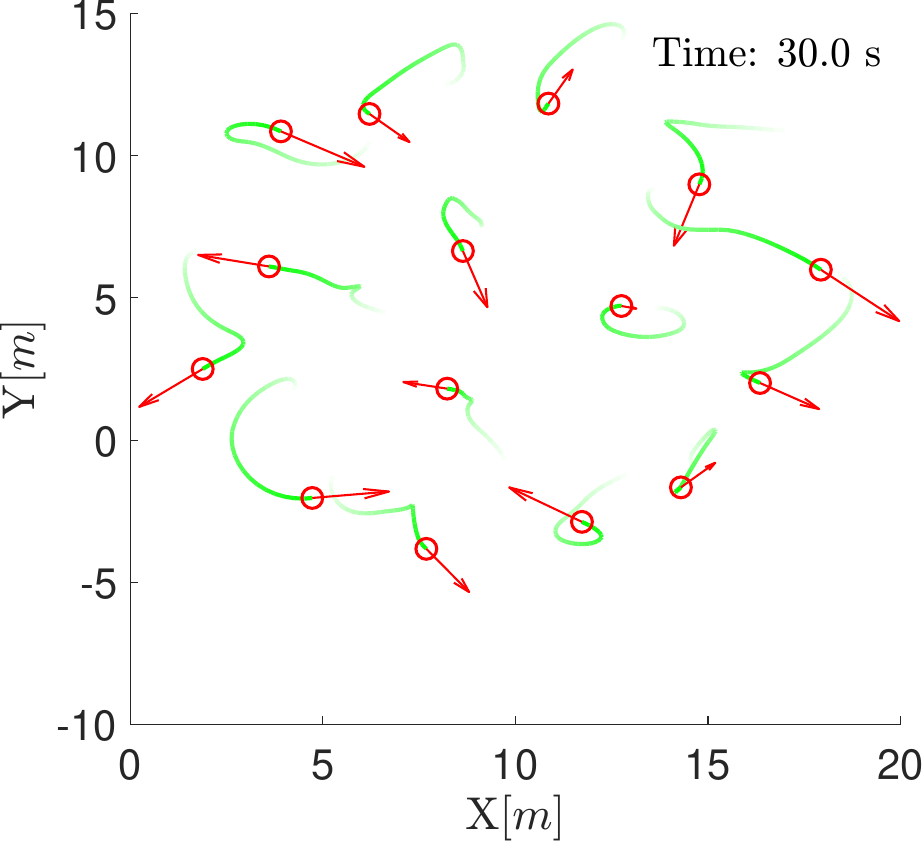}
        \subcaption[]{swarming} \label{fig:col_beh:s}
    \end{subfigure}
\caption{The model generates motion patterns that exhibit emergent collective behaviors of flocking, vortexing, and swarming, as depicted in (a) to (c), respectively, with \(m=2,\alpha=2,\beta=1\). These behaviors were observed under the following model setting: (a) \(n = 15, \eta_i =\eta_{\text{max}}=\eta_{\text{min}} = 3\), (b) \(n = 5, \eta_i =\eta_{\text{max}}=\eta_{\text{min}} = 6\), and (c) \(n = 15, \eta_i=\eta_{\text{max}}=\eta_{\text{min}} = 12\), with \(\delta_i =\delta_{\text{max}}=\delta_{\text{min}} = 1\), \(r_i=10\,\mathrm{m}\), and total time \(t = 30\,\mathrm{s}\) for all cases. The agents' positions and velocities were initialized randomly within the range $(0,10)\,\mathrm{m}$ and \((-1,1)\,\mathrm{m/s}\). The maximum speed and the time required for each agent to reach it are set to \(v_i^{\text{max}} = 5\,\mathrm{m/s}\) and \(t_i^{v_\text{max}} = 1\,\mathrm{s}\), respectively. The red arrow represents the velocity vector of the agent, with its length denoting the agent's speed, and the green trajectory indicates the agent's path over the last \(3\,\mathrm{s}\). }
    \label{fig:col_beh}
\end{figure*}

To assess the coherence and coordination of these behaviors, we examine the temporal evolution of average edge-based interaction errors for each agent, defined as
\[
\frac{1}{|\mathcal{N}_i|} \sum_{j\in\mathcal{N}_i} (\mathbf{p}_j - \mathbf{p}_i - \mathbf{p}_{ij}),\quad\frac{1}{|\mathcal{N}_i|} \sum_{j\in\mathcal{N}_i} (\mathbf{v}_j - \mathbf{v}_i - \mathbf{v}_{ij}),
\]
and present the results in Fig.~\ref{fig:edge}. Figs.~\ref{fig:edge}(\subref{fig:edge:px:f})–(\subref{fig:edge:vy:f}) correspond to the flocking scenario, showing the evolution of position and velocity errors along the $x$- and $y$-axes. The plots exhibit rapid convergence of position errors, indicative of cohesive motion. Velocity convergence, however, remains partial due to the nonzero kinetic offset $\eta_i = 3$, which permits controlled velocity divergence among neighboring agents. Figs.~\ref{fig:edge}(\subref{fig:edge:px:v})–(\subref{fig:edge:vy:v}) represent the vortexing scenario. In this case, both position and velocity errors rapidly diminish once the vortex pattern stabilizes, confirming the emergence of coordinated rotational motion. Finally, Figs.~\ref{fig:edge}(\subref{fig:edge:px:s})–(\subref{fig:edge:vy:s}) depict the swarming behavior. While the position errors converge swiftly—ensuring group cohesion—the velocity errors remain relatively large, consistent with the high kinetic offset $\eta_i = 12$, which facilitates significant velocity heterogeneity and promotes exploratory dynamics.
\begin{figure}[htbp]
    \centering
    \begin{subfigure}{0.24\linewidth}
        \centering
        \includegraphics[width=\linewidth]{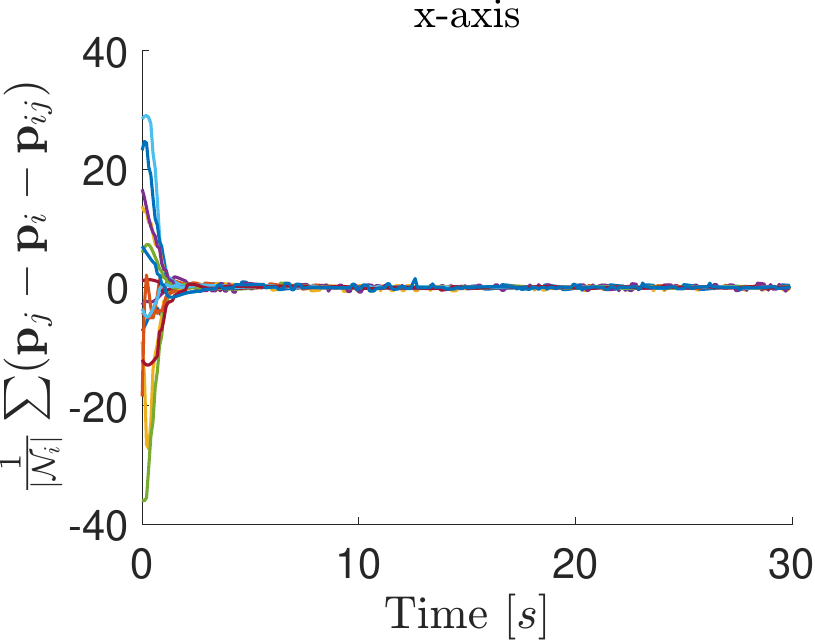}
        \subcaption[]{} \label{fig:edge:px:f}
    \end{subfigure}
    \begin{subfigure}{0.24\linewidth}
        \centering
        \includegraphics[width=\linewidth]{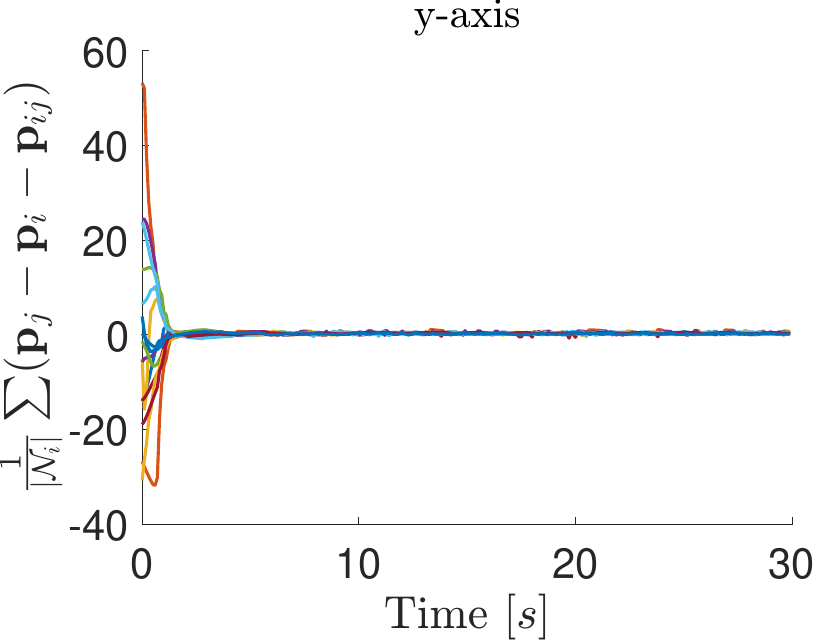}
        \subcaption[]{} \label{fig:edge:py:f}
    \end{subfigure}
    \begin{subfigure}{0.24\linewidth}
        \centering
        \includegraphics[width=\linewidth]{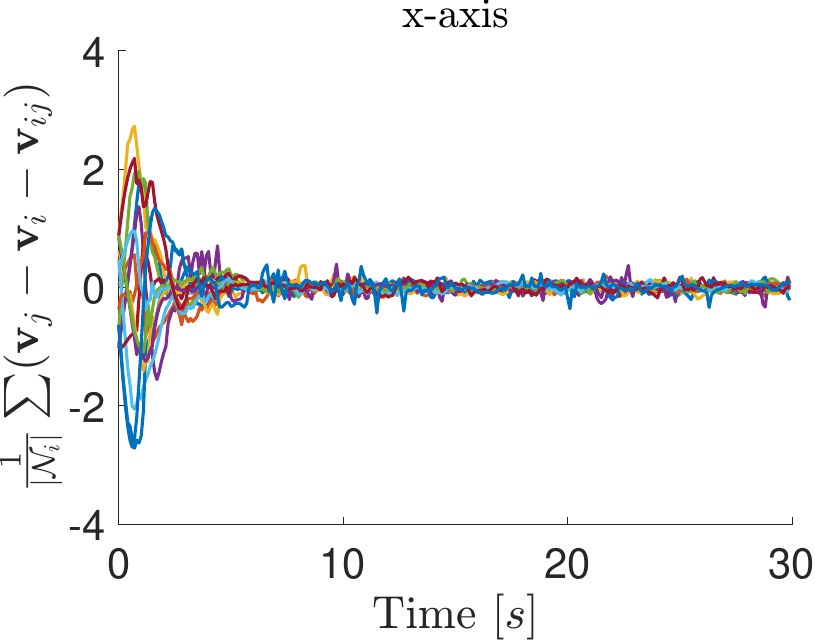}
        \subcaption[]{} \label{fig:edge:vx:f}
    \end{subfigure}
    \begin{subfigure}{0.24\linewidth}
        \centering
        \includegraphics[width=\linewidth]{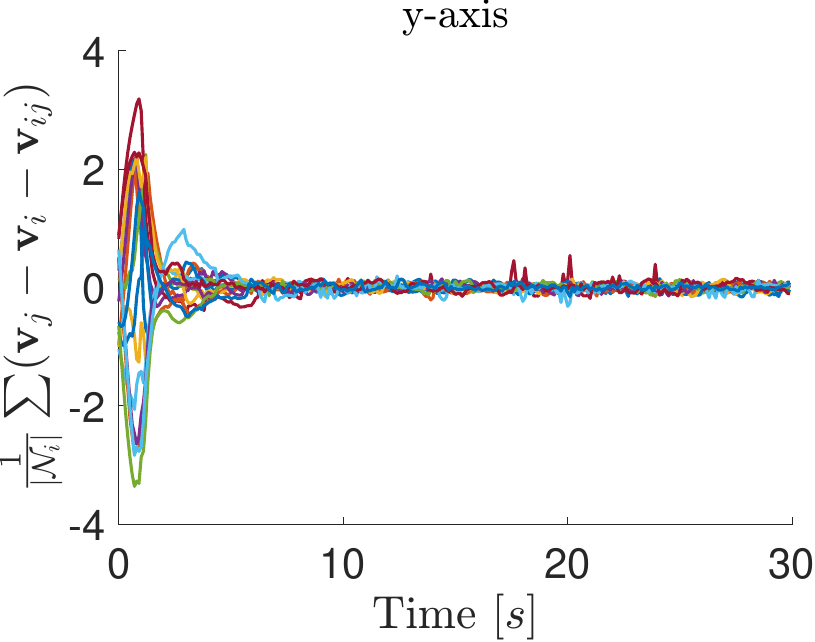}
        \subcaption[]{} \label{fig:edge:vy:f}
    \end{subfigure}
    \begin{subfigure}{0.24\linewidth}
        \centering
        \includegraphics[width=\linewidth]{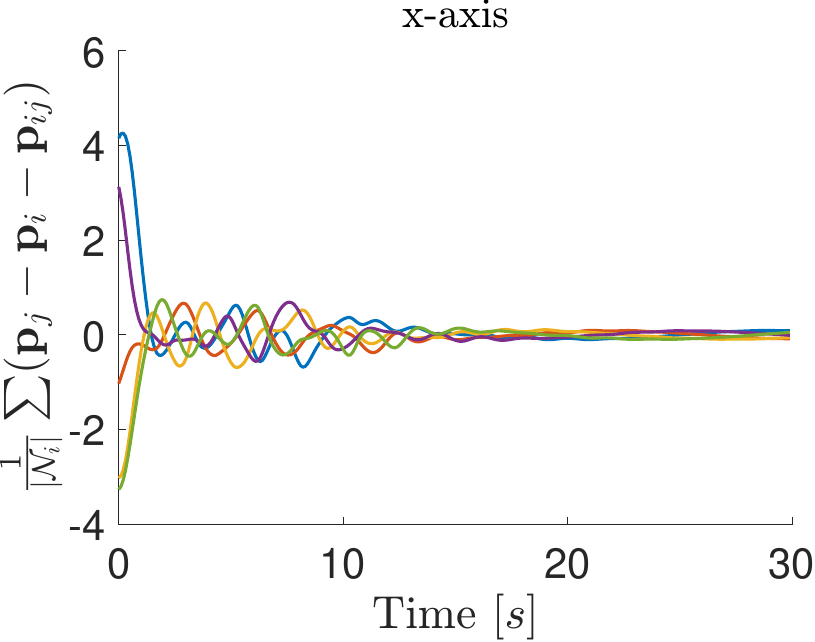}
        \subcaption[]{} \label{fig:edge:px:v}
    \end{subfigure}
    \begin{subfigure}{0.24\linewidth}
        \centering
        \includegraphics[width=\linewidth]{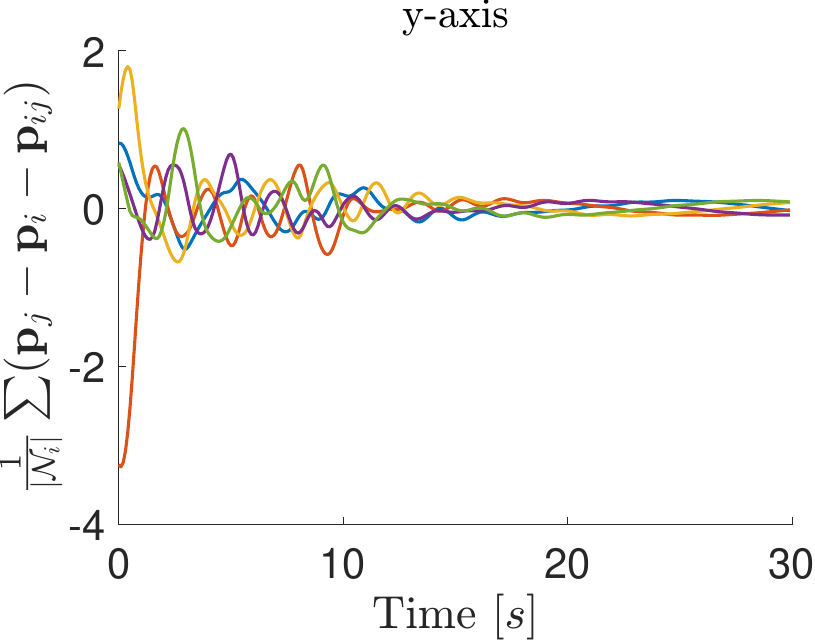}
        \subcaption[]{} \label{fig:edge:py:v}
    \end{subfigure}
    \begin{subfigure}{0.24\linewidth}
        \centering
        \includegraphics[width=\linewidth]{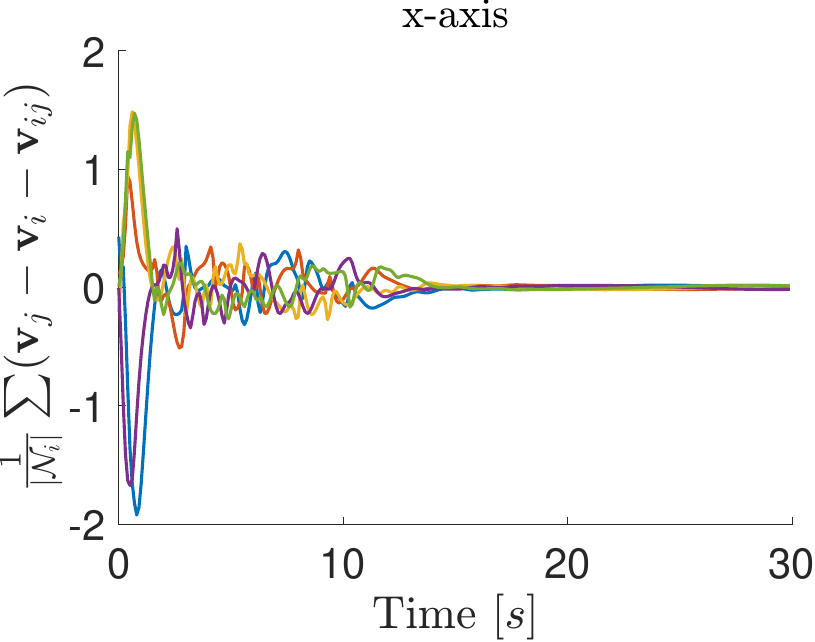}
        \subcaption[]{} \label{fig:edge:vx:v}
    \end{subfigure}
    \begin{subfigure}{0.24\linewidth}
        \centering
        \includegraphics[width=\linewidth]{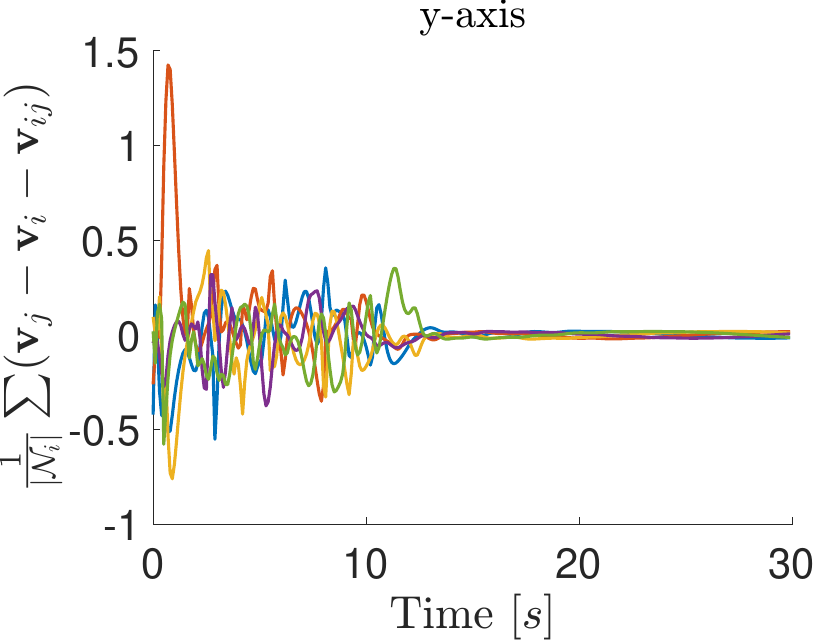}
        \subcaption[]{} \label{fig:edge:vy:v}
    \end{subfigure}
    \begin{subfigure}{0.24\linewidth}
        \centering
        \includegraphics[width=\linewidth]{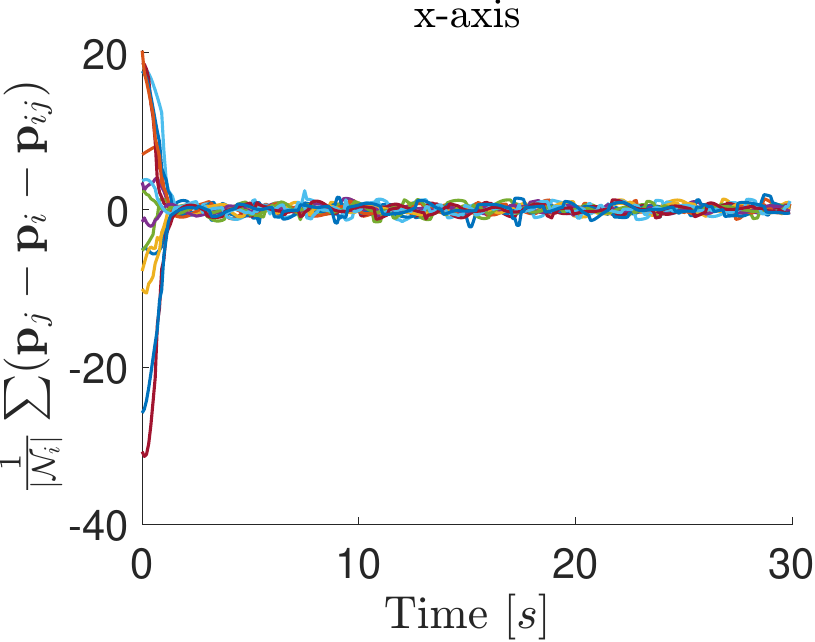}
        \subcaption[]{} \label{fig:edge:px:s}
    \end{subfigure}
    \begin{subfigure}{0.24\linewidth}
        \centering
        \includegraphics[width=\linewidth]{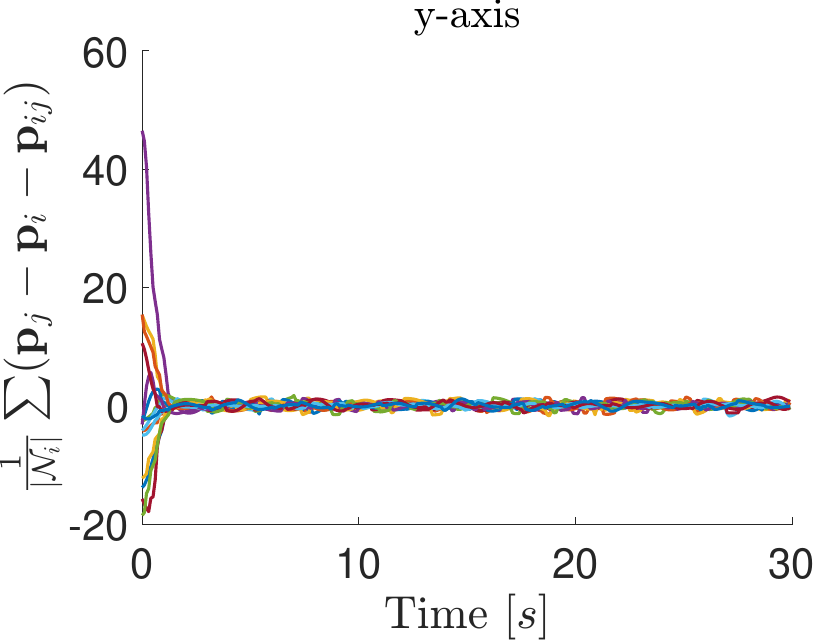}
        \subcaption[]{} \label{fig:edge:py:s}
    \end{subfigure}
    \begin{subfigure}{0.24\linewidth}
        \centering
        \includegraphics[width=\linewidth]{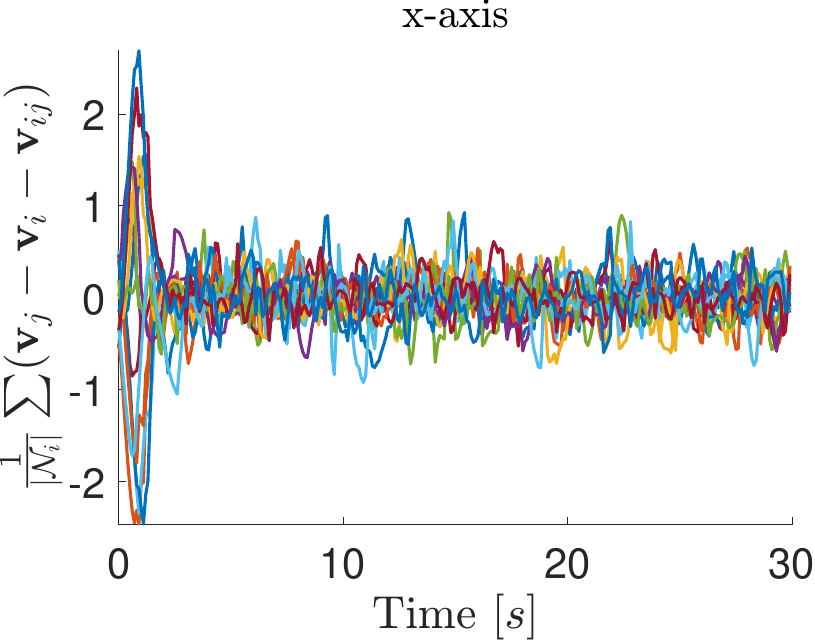}
        \subcaption[]{} \label{fig:edge:vx:s}
    \end{subfigure}
    \begin{subfigure}{0.24\linewidth}
        \centering
        \includegraphics[width=\linewidth]{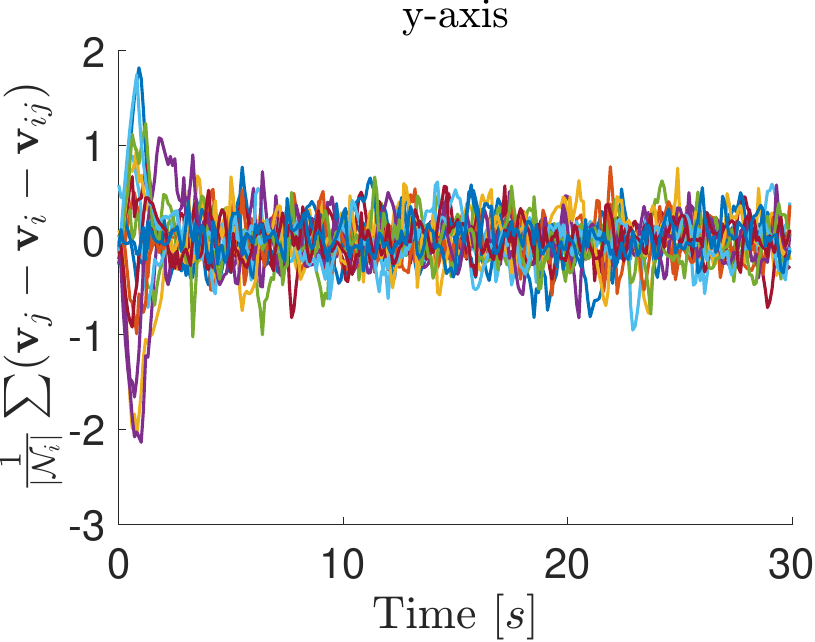}
        \subcaption[]{} \label{fig:edge:vy:s}
    \end{subfigure}
    \caption{Time evolution of average edge-based interaction errors for position and velocity in representative collective behaviors. (a–d) Flocking: position and velocity errors along the $x$- and $y$-axes, respectively, showing rapid convergence in position and partial convergence in velocity due to a moderate kinetic offset $\eta_i = 3$. (e–h) Vortexing: both position and velocity errors converge swiftly upon vortex formation, indicating coordinated rotational motion. (i–l) Swarming: position errors exhibit fast convergence, while velocity errors remain high due to a large kinetic offset $\eta_i = 12$, enabling diverse motion directions among agents.}
    \label{fig:edge}
\end{figure}

To systematically analyze the dynamics under varying conditions, we define several key indicators that capture behavioral transitions and their underlying causes. In particular, we expect to observe that transitions between these behaviors are closely linked to the emergence of alignment and misalignment in collective motion. We quantify the transition continuum using a metric that evaluates the overall alignment by computing the average cosine of the angles between all pairs of unit velocity vectors. The overall alignment metric \(h\) is defined as,  
\[
h = \text{avg}(\sum_{\forall(i,j)} \frac{\mathbf{v}_i^\top \mathbf{v}_j}{\|\mathbf{v}_i\| \| \mathbf{v}_j\|}),
\]
where \(\text{avg}(\cdot)\) denotes the average value, and the dot product returns the cosine of the angle between the unit velocity vectors. The maximum value of \(h = 1\) indicates perfect alignment, reflecting complete order. The flocking behavior emerges at higher values of \(h\). As the value of \(h\) decreases, the degree of disorder in the motion increases. In particular, \(h \approx -1\) signifies total misalignment with agents pointing in opposite directions and \(h \approx 0\) suggests random or orthogonal velocities. Vortexing behavior tends to arise when the number of agents is small and \(h\) takes negative values, whereas swarming typically occurs with larger agent populations and \(h\) values close to zero.

To evaluate whether the group maintains cohesion during collective motion, the radius of aggregation is defined as the maximum distance between any agent and the center of dispersion
\[
r_{\text{agg}} = \max_{i} \| \mathbf{p}_i - \frac{1}{n} \sum_{i=1}^{n} \mathbf{p}_i \|.
\]  

The typical motion patterns and behavioral phase transition diagram shown in Fig.~\ref{fig:analysis}(\subref{fig:analysis:phase}), the corresponding radii of aggregation in Fig.~\ref{fig:analysis}(\subref{fig:analysis:disp}), and the average speeds in Fig.~\ref{fig:analysis}(\subref{fig:analysis:speed}) provide an overview of our model for various population sizes \(n\). In particular, as \(n\) increases, the behavioral phase transition becomes smoother and eventually converges to random motion directions at the individual level. In summary, Fig.~\ref{fig:analysis} demonstrates that \(\eta_i\) induces the behavioral phase transition in collective motion, leading to the emergence of flocking, vortexing, and swarming behaviors. 
   
\begin{figure}[t]
    \centering
    \begin{subfigure}{0.32\linewidth}
        \centering
        \includegraphics[width=\linewidth]{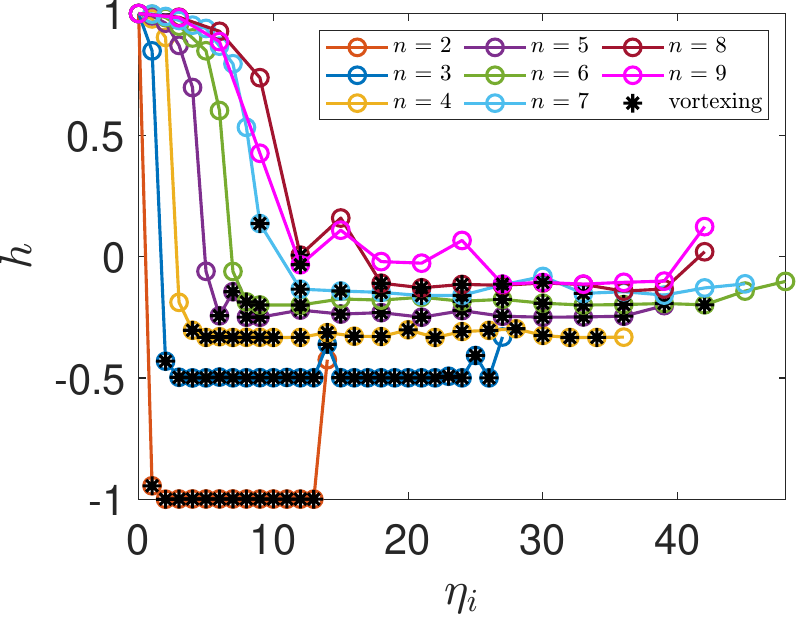}
    \end{subfigure}
    \begin{subfigure}{0.32\linewidth}
        \centering
        \includegraphics[width=\linewidth]{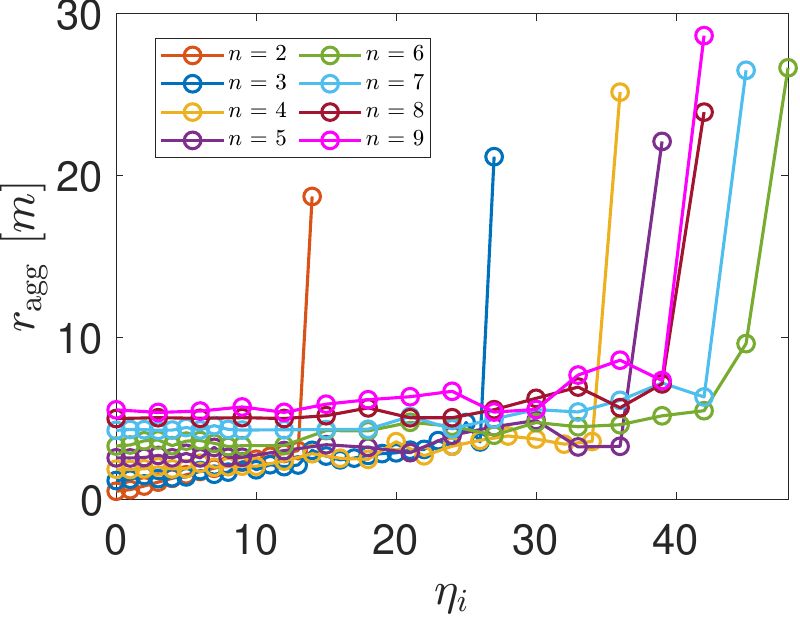}
    \end{subfigure}
     \begin{subfigure}{0.32\linewidth}
        \centering
        \includegraphics[width=\linewidth]{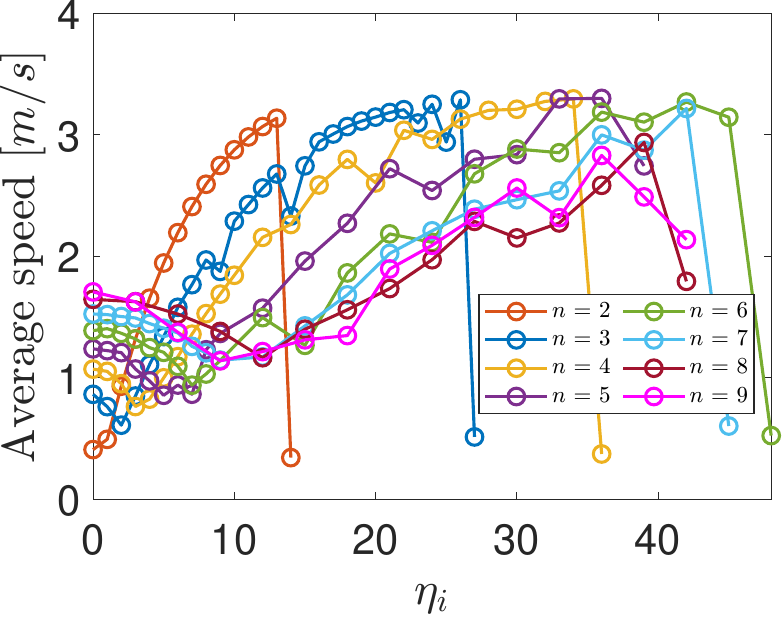}
    \end{subfigure}
    \vskip 0.5\baselineskip
    \begin{subfigure}{0.32\linewidth}
        \centering
        \includegraphics[width=\linewidth]{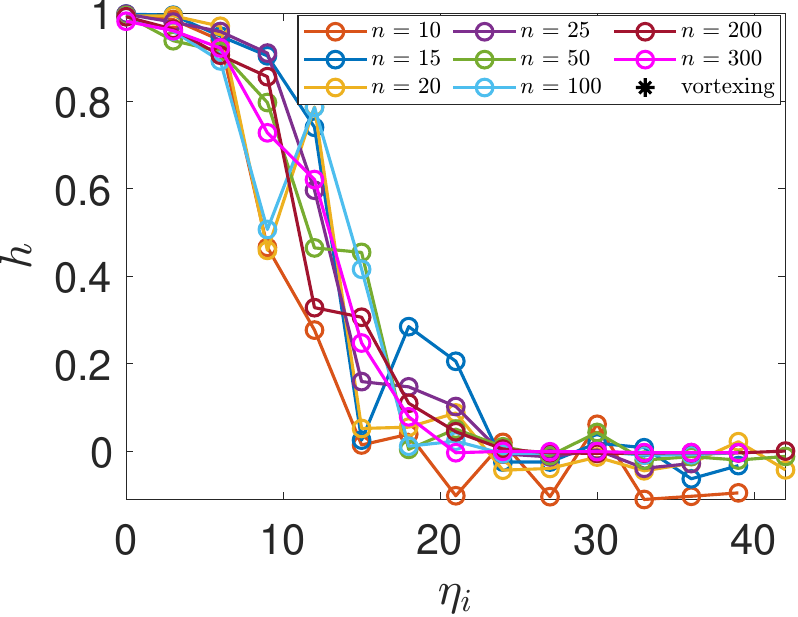}
        \subcaption[]{} \label{fig:analysis:phase}
    \end{subfigure}
    \begin{subfigure}{0.32\linewidth}
        \centering
        \includegraphics[width=\linewidth]{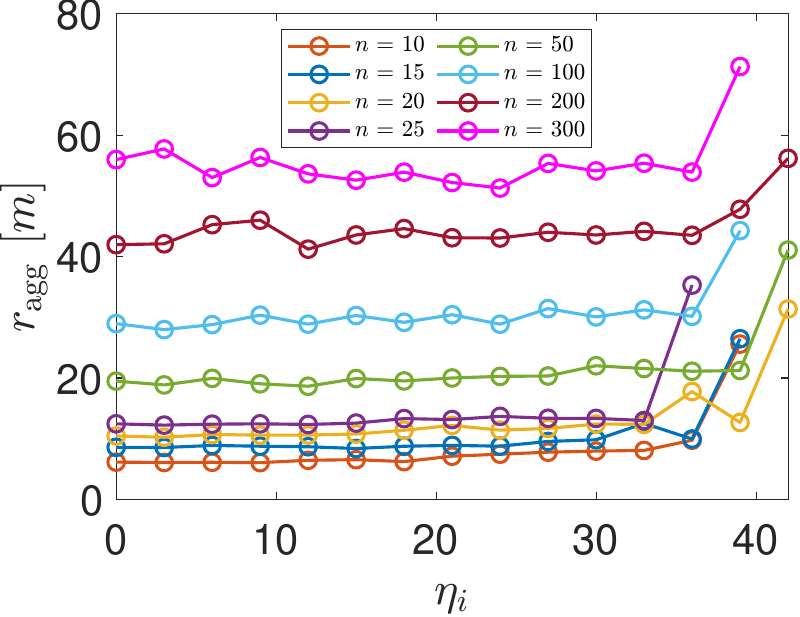}
        \subcaption[]{} \label{fig:analysis:disp}
    \end{subfigure}
    \begin{subfigure}{0.32\linewidth}
        \centering
        \includegraphics[width=\linewidth]{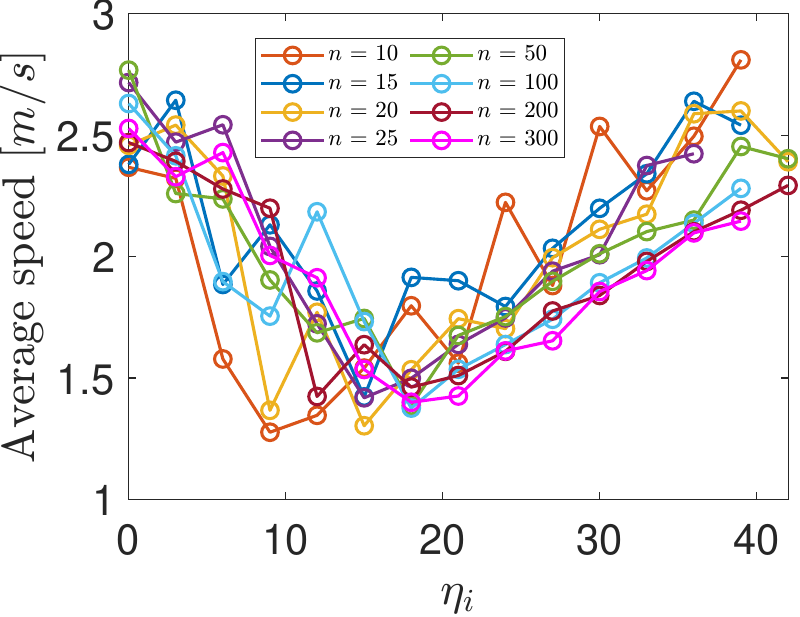}
        \subcaption[]{} \label{fig:analysis:speed}
    \end{subfigure}
    \caption{An overview of our model: (a) motion patterns and behavioral phase transition diagram, (b) radii of dispersion, and (c) average speeds. The highest \(\eta_i\) for each \(n\) represents the breakpoint where the aggregation of collective motion is no longer maintained. The initialization follows the same setup as in Fig.~\ref{fig:col_beh}, except for the initial positions for \(n=50, 100, 200, 300\), where the upper range is extended to \(20\, \mathrm{m}\), \(30\, \mathrm{m}\), \(50\, \mathrm{m}\), and \(75\, \mathrm{m}\), respectively. Given consistent outcomes across repeated trials, the results from a single representative run are hereby presented. 
}
    \label{fig:analysis}
\end{figure}

\begin{remark}
Vortexing behavior does not emerge explicitly through the kinetic offset parameter tuning across arbitrary population sizes. However, as illustrated in Fig.~\ref{fig:analysis}(\subref{fig:analysis:phase}), it consistently arises as a typical collective pattern in small-scale populations ($n<10$). Future work may explore model extensions or variations that allow vortexing behavior on broader population scales.
\end{remark}

\begin{remark}
To suppress the emergence of structured disorder—manifested as vortex-like motion, as illustrated in Fig.~\ref{fig:col_beh}(\subref{fig:col_beh:v1})—and instead promote disordered yet cohesive swarming behavior in low-density settings (e.g. $n < 10$), the parameter $\beta$ can be effectively increased. A larger $\beta$ amplifies the alignment weight $\phi(\cdot)$, thereby preventing the formation of structured motion patterns and facilitating the globally disordered dynamics characteristic of swarm-like behavior. For example, increasing $\beta$ from $1$ to $5$ transitions the collective behavior from that shown in Fig.~\ref{fig:col_beh}(\subref{fig:col_beh:v1}) to the swarm-like behavior depicted in Fig.~\ref{fig:col_beh}(\subref{fig:col_beh:s}).
\end{remark}

To illustrate the continuous effect of the kinetic offset $\eta_i$ on collective order across varying population sizes $n$, we present the phase diagram in Fig.~\ref{fig:phase}. It shows that both $\eta_i$ and $n$ induce phase transitions, but their impacts differ significantly. A phase transition occurs consistently as $\eta_i$ increases from 10 to 15 for all $n>10$. In contrast, for $n<10$, a phase transition is observed mostly when $\eta_i$ is between 0 and 10. Generally, lower values of $\eta_i$ yield higher alignment scores, promoting the faster and stronger group cohesion characteristic of flocking behavior. As $\eta_i$ increases, the alignment score decreases, indicative of weakening group cohesion, and the collective exhibits swarming-like characteristics. The phase diagram in Fig.~\ref{fig:phase} reveals a critical boundary of $\eta_i$ beyond which collective order disintegrates.

\begin{figure}[t]
    \centering

    \includegraphics[width=0.5\linewidth]{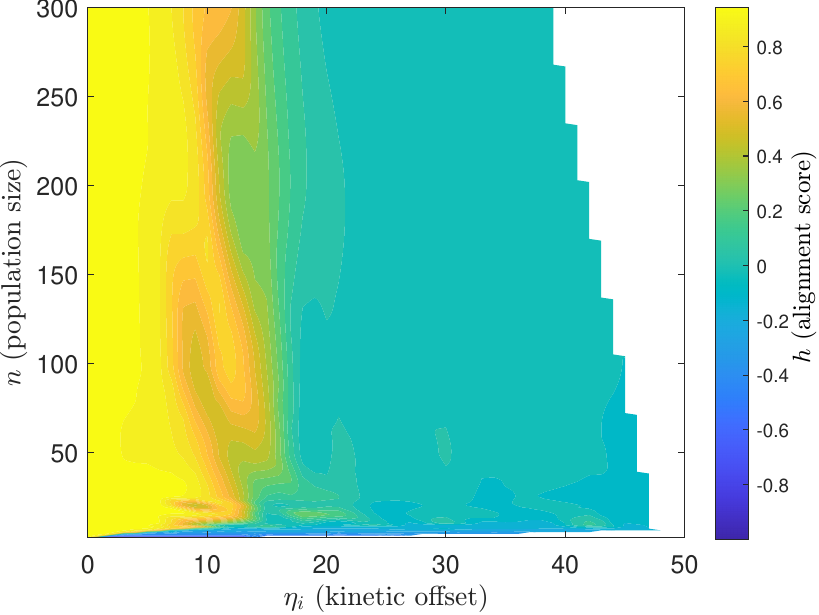}
 
    \caption{Phase transition diagram illustrating the alignment score \(h\) as a function of the kinetic offset $\eta_i$ and population size $n$. Contour levels represent the degree of alignment achieved by the collective for varying combinations of $\eta_i$ and $n$, revealing critical transition regions where collective behavior shifts from disordered to ordered states. The uncolored region corresponds to the critical $\eta_i$ values, beyond which aggregation in a population of size $n$ breaks down, as reflected in the dispersion radii shown in Fig.~\ref{fig:analysis}(\subref{fig:analysis:disp}). }
    \label{fig:phase}
\end{figure}

We further examine how spatial offset, representing the agents’ propensity to aggregate, influences the emergence and structure of collective motion. The average distance \(d_{\text{avg}}\) and minimum distance \(d_{\min}\) are calculated as follows,
\[
d_{\text{avg}} = \text{avg}(\sum_{\forall(i,j)} \|\mathbf{p}_j- \mathbf{p}_i\|),\quad d_{\min} = \min_{\forall(i,j)}\|\mathbf{p}_j- \mathbf{p}_i\|.
\]

In Fig.~\ref{fig:agg}, we plot the minimum distance, average distance, and radii of aggregation for the collective motion of \(n = 100\) agents evolving over \(t = 30\) s, with gradually increasing values of \(\delta_i\). Figs.~\ref{fig:agg}(\subref{fig:agg:dmin3})–\ref{fig:agg}(\subref{fig:agg:r3}) correspond to \(\eta_i = 3\), which produces a relatively ordered collective motion, exhibiting flocking behavior. As seen in Fig.~\ref{fig:agg}(\subref{fig:agg:dmin3}), a nonzero \(\delta_i\) prevents inter-agent collisions and results in a relatively stable separation distance. Both the average distances and radii of aggregation increase as \(\delta_i\) increases, with \(\delta_i = 0.5\) and \(1\) leading to the smoothest flocking behavior. Figs.~\ref{fig:agg}(\subref{fig:agg:dmin15})–\ref{fig:agg}(\subref{fig:agg:r15}) correspond to \(\eta_i = 21\), which leads to disordered collective motion, exhibiting swarming behavior. Although the evolution of disorder engenders random motion directions at the individual level and consequently results in difficulties in establishing a stable minimum distance, collision avoidance is nevertheless preserved, as observed from Fig.~\ref{fig:agg}(\subref{fig:agg:dmin15}). Notably, \(\delta_i = 0.5\) and \(1\) promote safer and more stable swarming behavior, as the fluctuations in minimum distances are minimized. Accordingly, Fig.~\ref{fig:agg} illustrates that the degree of aggregation in collective motion is directly modulated by the agents’ propensity to aggregate, as parameterized by the spatial offset $\delta_i$. Moreover, a nonzero \(\delta_i\) effectively prevents collisions, with the desired results observed for \(\delta_i = 0.5\) and \(1\) in both flocking and swarming behaviors.

\begin{figure}[htbp]
    \centering
    \begin{subfigure}{0.32\linewidth}
        \centering
        \includegraphics[width=\linewidth]{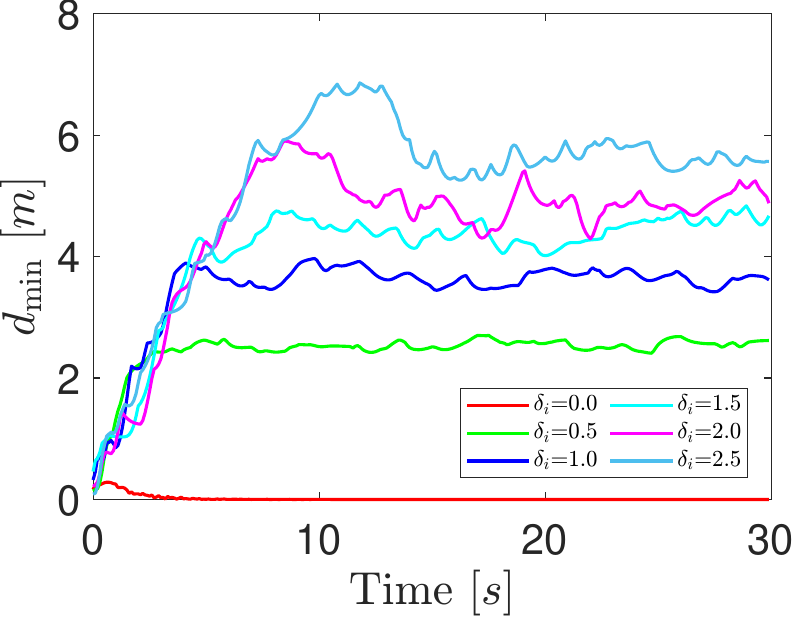}
        \subcaption[]{} \label{fig:agg:dmin3}
    \end{subfigure}
    \begin{subfigure}{0.32\linewidth}
        \centering
        \includegraphics[width=\linewidth]{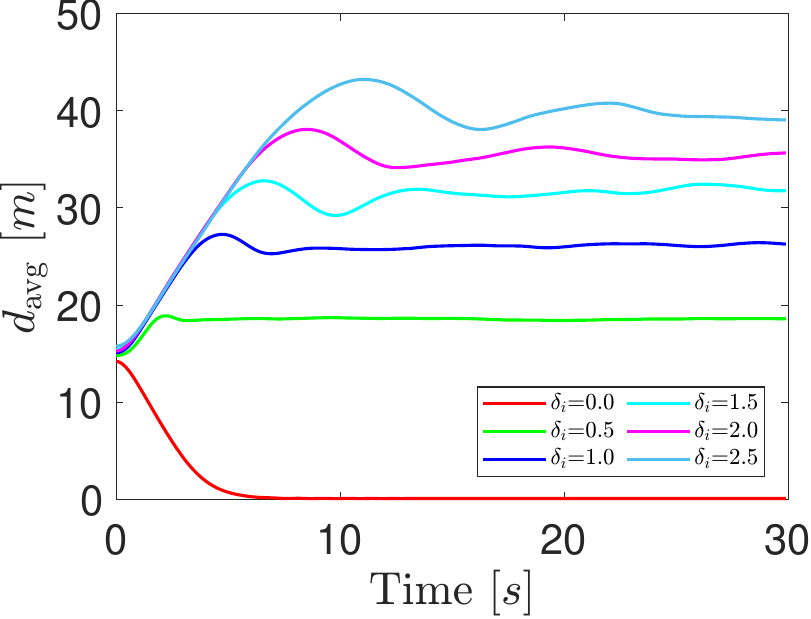}
        \subcaption[]{} \label{fig:agg:davg3}
    \end{subfigure}
    \begin{subfigure}{0.32\linewidth}
        \centering
        \includegraphics[width=\linewidth]{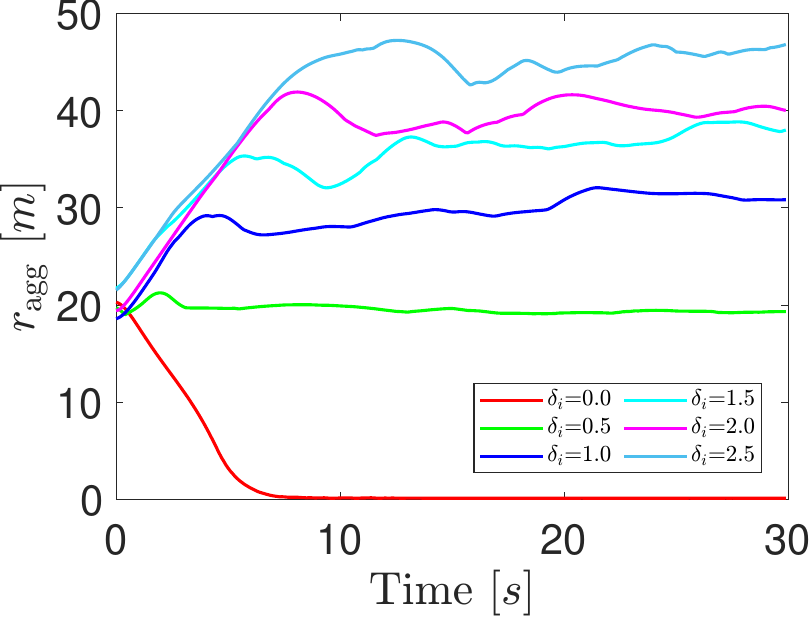}
        \subcaption[]{} \label{fig:agg:r3}
    \end{subfigure}
    \begin{subfigure}{0.32\linewidth}
        \centering
        \includegraphics[width=\linewidth]{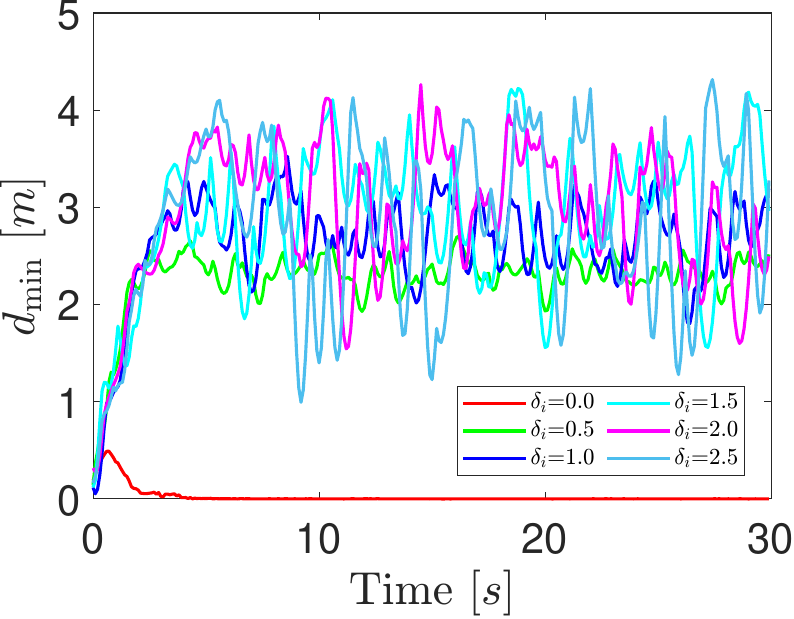}
        \subcaption[]{} \label{fig:agg:dmin15}
    \end{subfigure}
    \begin{subfigure}{0.32\linewidth}
        \centering
        \includegraphics[width=\linewidth]{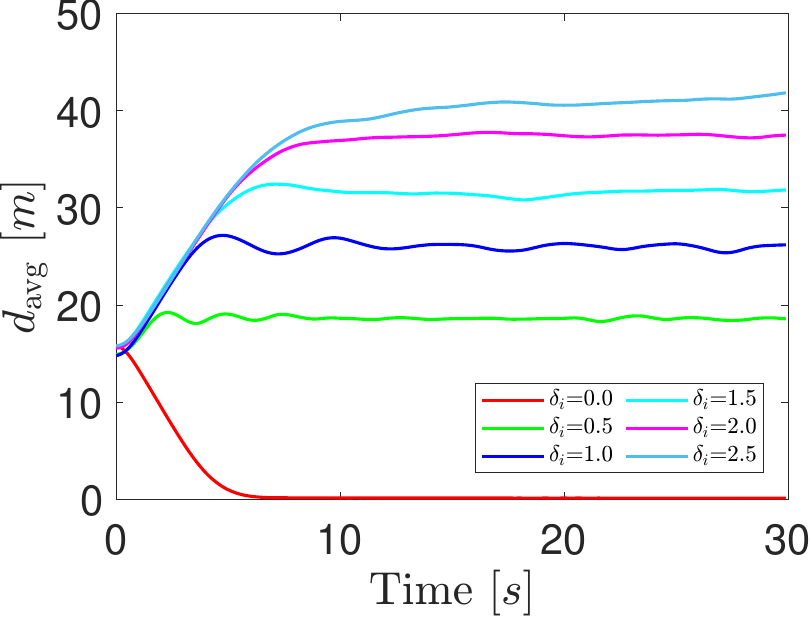}
        \subcaption[]{} \label{fig:agg:davg15}
    \end{subfigure}
    \begin{subfigure}{0.32\linewidth}
        \centering
        \includegraphics[width=\linewidth]{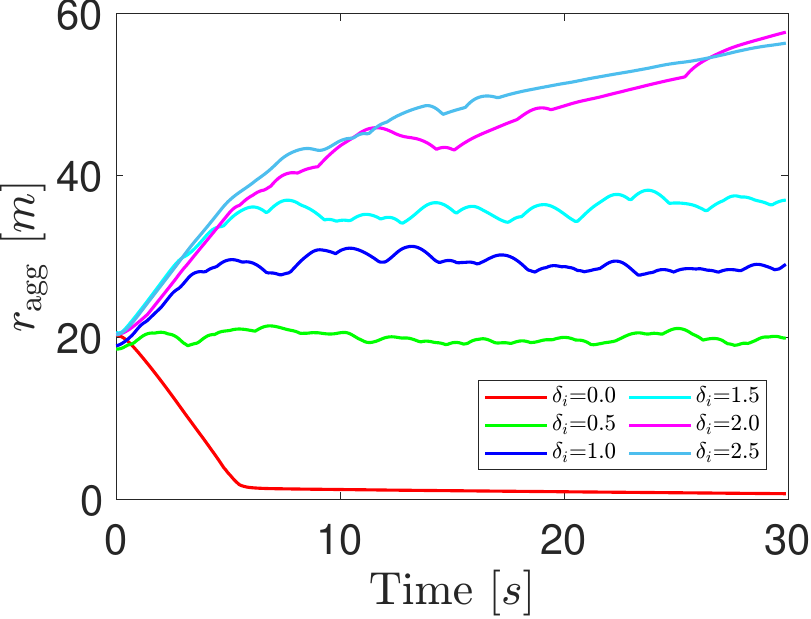}
        \subcaption[]{} \label{fig:agg:r15}
    \end{subfigure}
    \caption{Overview of collective motion for \(n = 100\) agents over \(t = 30\, \mathrm{s}\) with a time step of \(0.1\, \mathrm{s}\) and varying values of \(\delta_i\). (a) and (d) show the minimum distance, (b) and (e) show the average distance, and (c) and (f) show the radii of aggregation. The results (a) to (c) correspond to \(\eta_i = 3\), where the agents exhibit flocking behavior, and (d) to (f) correspond to \(\eta_i = 21\), where agents display swarming behavior.
}
    \label{fig:agg}
\end{figure}

From Fig.~\ref{fig:agg}, the evolution of the spatial dispersion of the group, as indicated by the minimum distance, the average distance, and the radii of aggregation, demonstrates the impact of varying spatial offset $\delta_i$ on collective movement. Specifically, lower values of $\delta_i$ result in more compact formations, which is favorable for flocking behavior. In contrast, higher values of $\delta_i$ increase spatial separation between agents, fostering swarming behavior marked by greater individual dispersion and reduced coordination. These dynamics emphasize the role of $\delta_i$ in modulating the transition between cohesive flocking and dispersed swarming states. However, it is important to note that the primary factor driving group behavioral transitions is the kinetic offset $\eta_i$, as discussed earlier.

\begin{remark}
In biological swarms and flocks, collisions typically occur when reaction times are insufficient, initial velocities are excessively high, or inter-individual distances are too small. The collision avoidance mechanism in the proposed model exhibits a more naturalistic character compared to the explicitly imposed repulsive interactions in the Olfati-Saber model. Provided that initial velocities remain within a moderate range and agents are not excessively proximate, the model generally maintains effective collision avoidance. This behavior can be further modulated via the spatial offset parameter \(\delta_i\), which regulates the repulsive tendency arising from spatial proximity.
\end{remark}

The proposed model ensures that agents consistently establish and maintain a personal space (when \(\delta_i > 0\)), effectively preventing collisions or excessive proximity to their neighbors. This self-organizing behavior emerges naturally from the model dynamics and does not depend on carefully selected initial conditions. Even in simulations where agents are initialized with overlapping or colliding positions, the model rapidly enforces separation, allowing agents to establish their personal space almost instantaneously. This robustness highlights the model's ability to manage collision-free motion regardless of the initial configuration of the agents.

\subsection{Cluttered environment}
 A simulation of 10 agents in a $100 \times 100\,\text{m}$ environment with obstacles was conducted to validate the target-directed collective behavior model in a cluttered environment. The agents aimed to reach a target at $(90, 90)\,\text{m}$ while avoiding three circular obstacles located at $(25, 30)\,\text{m}$, $(50, 40)\,\text{m}$, and $(90, 80)\,\text{m}$, each with a radius of \(5\,\text{m}\) , as shown in Fig.~\ref{fig:obs-avo}. The parameters were set as follows: $r_i = 10\,\text{m}$, $\delta_i = 1$, $\eta_i = 0.5$, $\alpha = 2$, $\beta = 1$, $\kappa_i = 0.5$, $c_i = 15\,\text{m}$, and $\sigma = 3$. Over a \(40\,\text{s}\) simulation, the agents demonstrated cohesive, collision-free motion, effectively balancing aggregation, alignment, and obstacle avoidance. The flock temporarily fragmented near the second obstacle but later regrouped and successfully reached the target. The choice of \(\eta_i = 0.5\) accounts for environmental noise, ensuring stable alignment under realistic conditions.
 
\begin{figure*}[htbp]
    \centering
    \begin{subfigure}{0.22\linewidth}
        \centering
        \includegraphics[width=\linewidth]{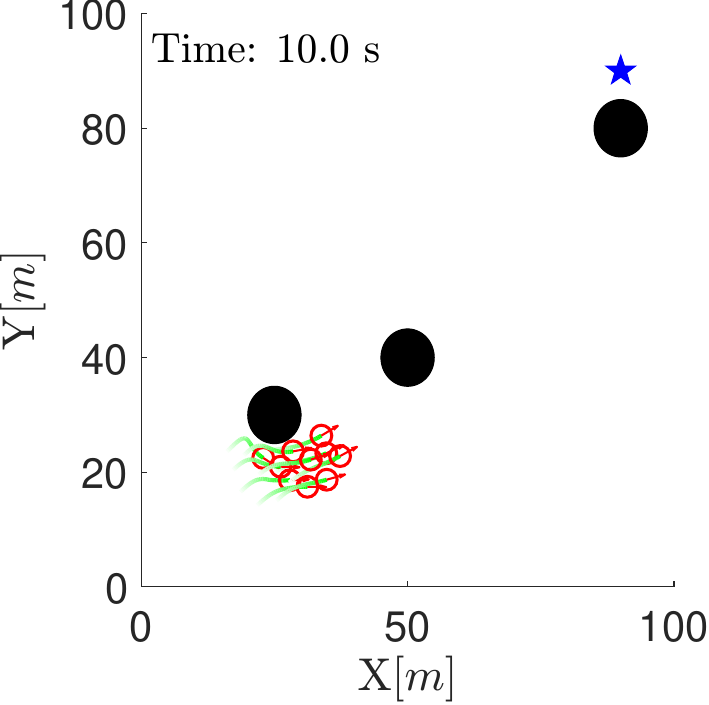}
        \subcaption[]{} 
    \end{subfigure}
    \begin{subfigure}{0.22\linewidth}
        \centering
        \includegraphics[width=\linewidth]{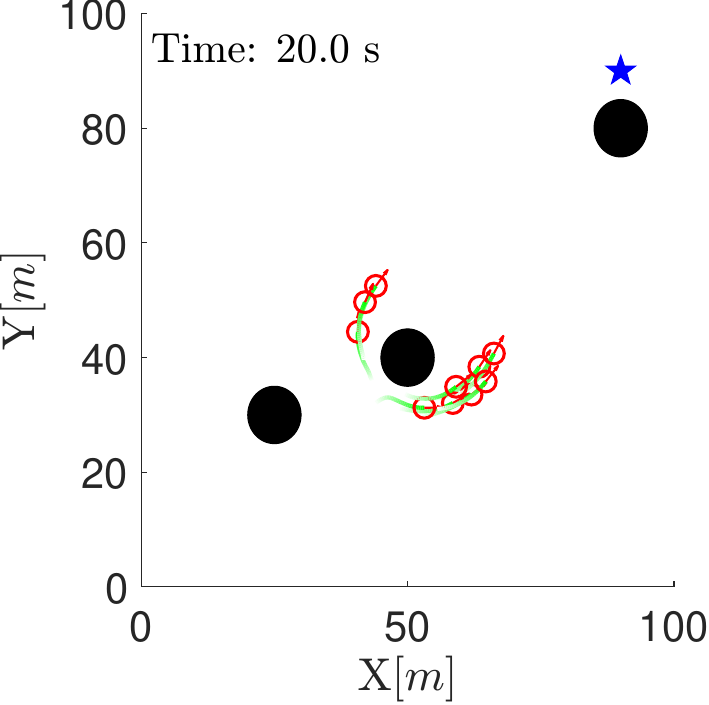}
        \subcaption[]{} 
    \end{subfigure}
    \begin{subfigure}{0.22\linewidth}
        \centering
        \includegraphics[width=\linewidth]{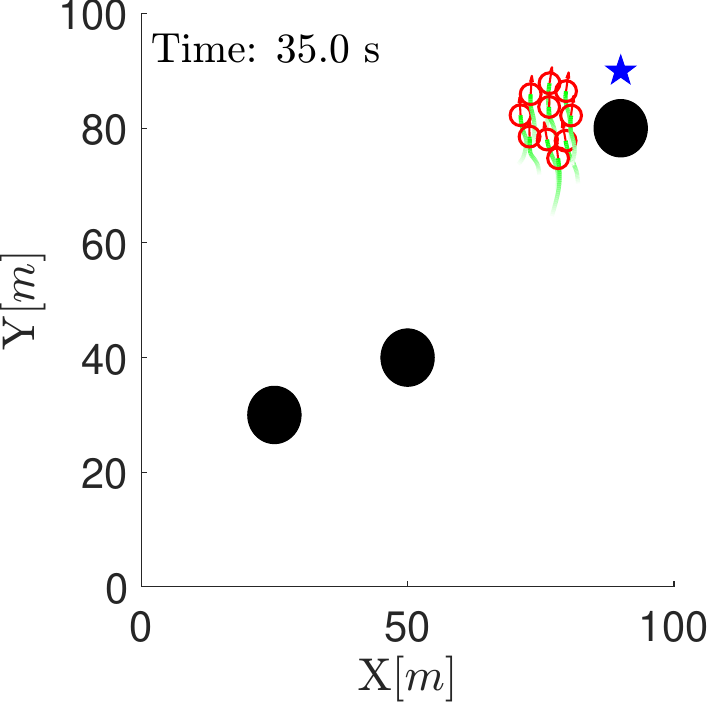}
        \subcaption[]{} 
    \end{subfigure}
    \begin{subfigure}{0.22\linewidth}
        \centering
        \includegraphics[width=\linewidth]{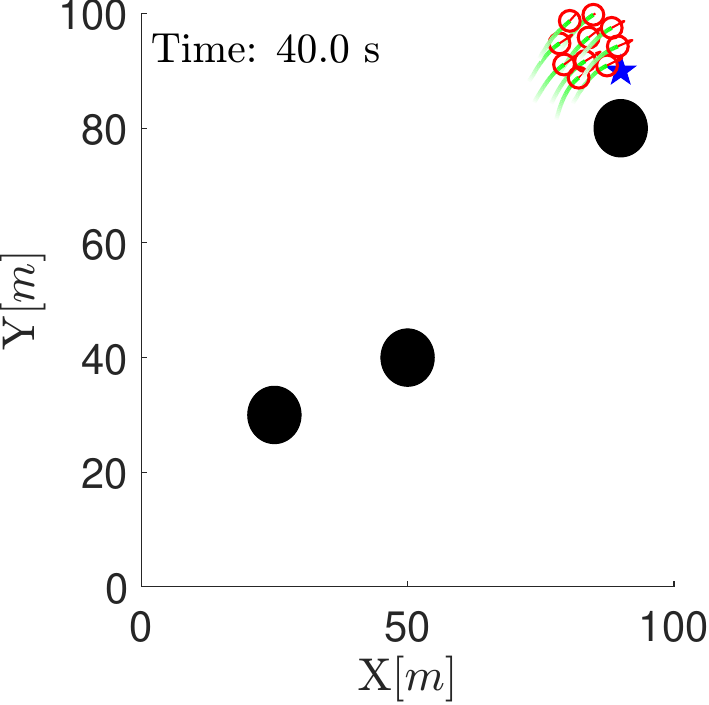}
        \subcaption[]{} 
    \end{subfigure}
    \caption{Snapshots of 10 agents navigating a $100\times100\,\text{m}$ environment toward a target at \((90,90)\,\text{m}\), avoiding obstacles at \((25,30)\,\text{m}\), \((50,40)\,\text{m}\), and \((90,80)\,\text{m}\), each with a \(5\,\text{m}\) radius. With initial positions in \([0,10]\,\text{m}\), subfigures depict: (a) encounter with the obstacle at \((25,30)\,\text{m}\), (b) encounter with the obstacle at \((50,40)\,\text{m}\), (c) encounter with the obstacle at \((90,80)\,\text{m}\), and (d) arrival at the target. Trajectories are shown in green, agent positions as red circles, velocities as red arrows, the target as a blue star, and obstacles as black circles.}

    \label{fig:obs-avo}
\end{figure*}

\subsection{Energy-driven adaptation}
To evaluate the cognitive model with the energy-driven adaptation mechanism, we define the parameters \(\delta_{\text{min}}\), \(\delta_{\text{max}}\), \(\eta_{\text{min}}\), and \(\eta_{\text{max}}\), allowing agents to adjust their policies \(\delta_i\) and \(\eta_i\) based on their energy level \(E_i\). From Fig.~\ref{fig:analysis}(\subref{fig:analysis:disp}), we define \(\eta_{\max} = 13\) for \(n = 2\), \(\eta_{\max} = 26\) for \(n = 3\), and \(\eta_{\max} = 33\) for \(n > 3\). However, to ensure robustness in the presence of varying \(\eta_i\) among neighbors, a more conservative choice is \(\eta_{\max} = 15\), as extreme values may not sustain aggregation. Additionally, Fig.~\ref{fig:agg} provides an intuitive basis for determining \(\delta_{\max}\). Specifically, \(\delta_{\max}=2.5\) for flocking and \(\delta_{\max}=1.5\) for swarming. Based on these observations, we set \(\delta_{\max}=2\) and \(\delta_{\min}=0.5\) as reasonable values, ensuring that agents remain within their interaction zone radius \(r_i\). 

We run the model with \(n = 20\) agents, both with and without the adaptation mechanism enabled. Fig.~\ref{fig:eng}(\subref{fig:eng:eta:f})–(\subref{fig:eng:E:f}) presents the time histories of \(\eta_i\), \(\delta_i\), and \(E_i\) for flocking behavior in a compact formation, where agents exhibit strong alignment and maintain close proximity to their neighbors. Fig.~\ref{fig:eng}(\subref{fig:eng:eta:s})-(\subref{fig:eng:E:s}) displays the corresponding time histories for swarming behavior, where agents are driven to explore the environment. Finally, Fig.~\ref{fig:eng}(\subref{fig:eng:eta:ada})-(\subref{fig:eng:E:ada}) illustrates the time histories of \(\eta_i\), \(\delta_i\), and \(E_i\) when agents dynamically adapt \(\eta_i\) and \(\delta_i\) based on their energy levels \(E_i\). The snapshots of collective motion under the energy-driven adaptive mechanism are depicted in Fig.~\ref{fig:cog_mot}, illustrating how agents individually adjust their \(\delta_i\) and \(\eta_i\) policies in response to their own and their neighbors' energy levels, thereby shaping the dynamics of collective behavior.

From Fig.~\ref{fig:eng}(\subref{fig:eng:E:f}), Fig.~\ref{fig:eng}(\subref{fig:eng:E:s}), and Fig.~\ref{fig:eng}(\subref{fig:eng:E:ada}), three key observations emerge: (1) flocking in a compact formation conserves energy, (2) swarming behavior incurs higher energy expenditure, and (3) the energy-driven adaptation mechanism effectively regulates energy efficiency by transitioning agents from energy-intensive swarming to energy-conserving flocking when their energy drops below a threshold. This highlights the mechanism’s ability to optimize agent behavior under energy constraints. 

\begin{figure}[htbp]
    \centering
    \begin{subfigure}{0.24\linewidth}
        \centering
        \includegraphics[width=\linewidth]{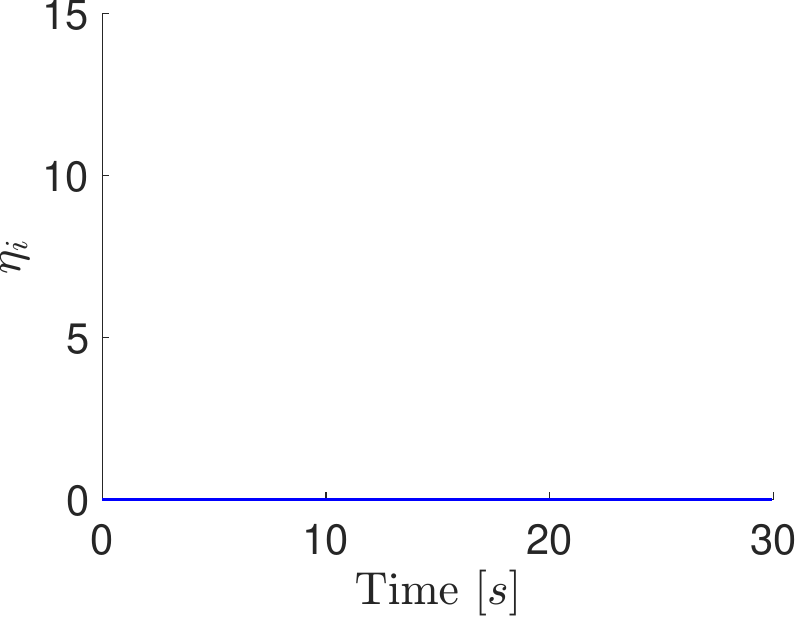}
        \subcaption[]{} \label{fig:eng:eta:f}
    \end{subfigure}
    \begin{subfigure}{0.24\linewidth}
        \centering
        \includegraphics[width=\linewidth]{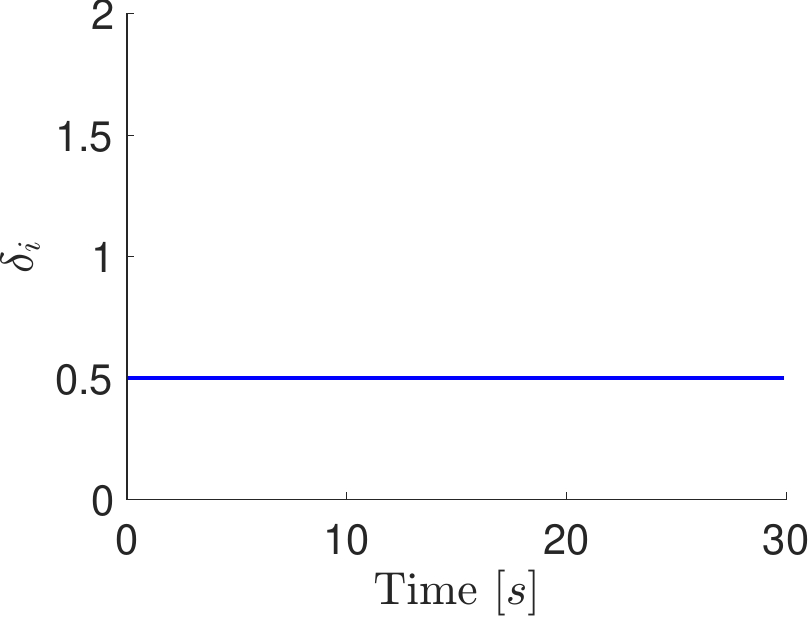}
        \subcaption[]{} \label{fig:eng:del:f}
    \end{subfigure}
    \begin{subfigure}{0.24\linewidth}
        \centering
        \includegraphics[width=\linewidth]{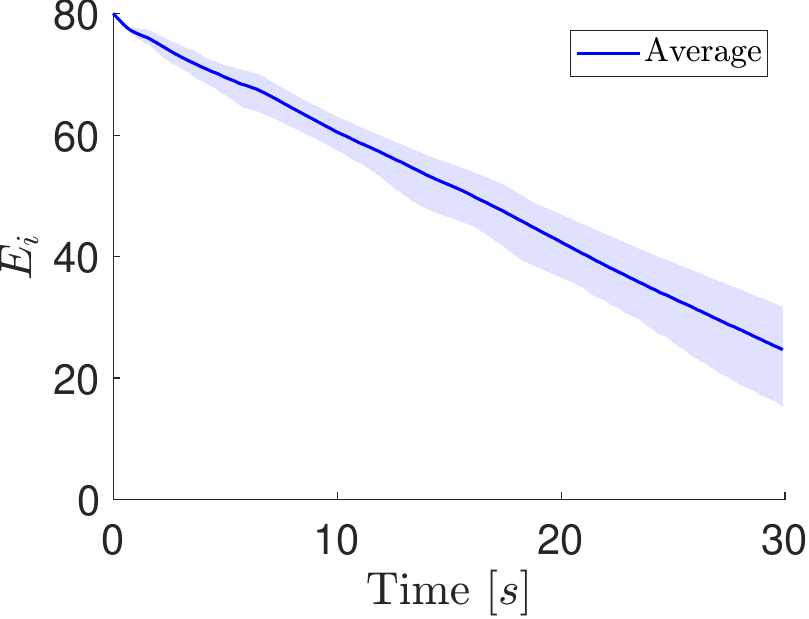}
        \subcaption[]{} \label{fig:eng:E:f}
    \end{subfigure}
    \begin{subfigure}{0.24\linewidth}
        \centering
        \includegraphics[width=\linewidth]{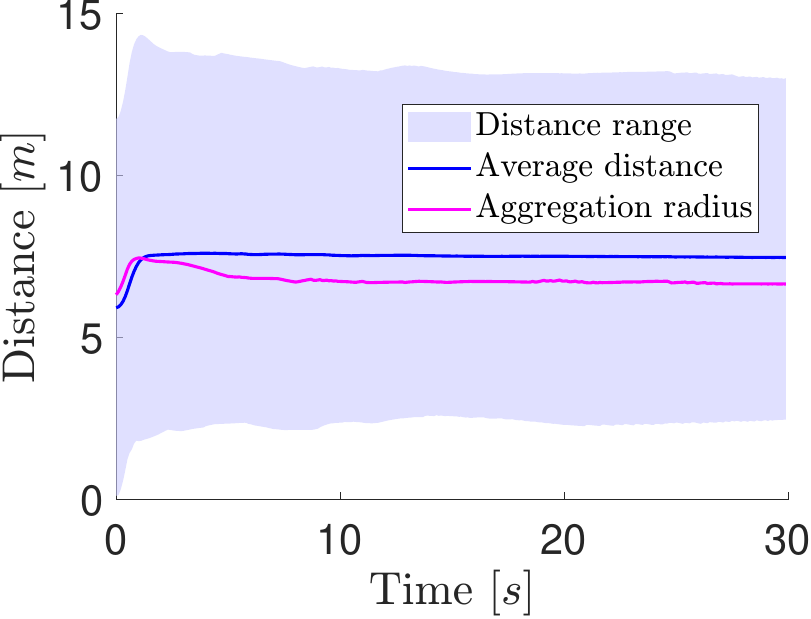}
        \subcaption[]{} \label{fig:eng:d:f}
    \end{subfigure}
    \begin{subfigure}{0.24\linewidth}
        \centering
        \includegraphics[width=\linewidth]{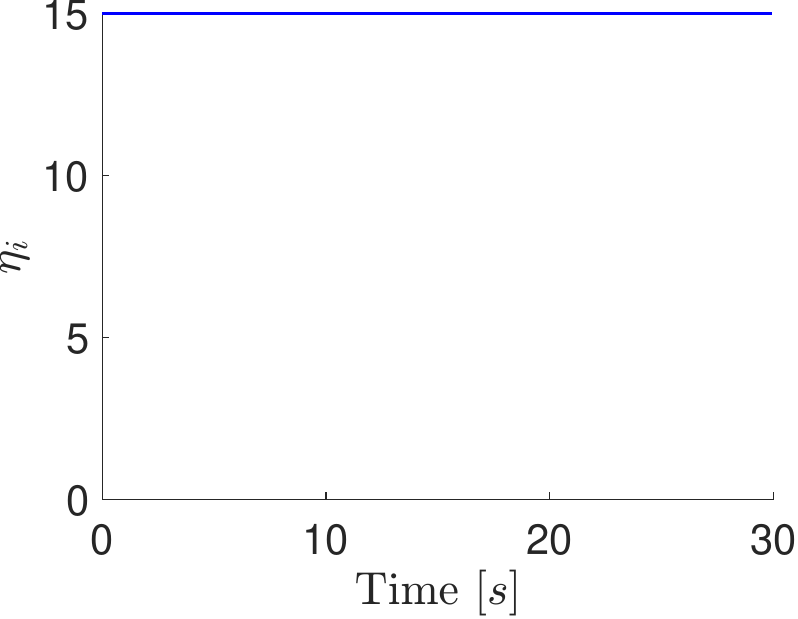}
        \subcaption[]{} \label{fig:eng:eta:s}
    \end{subfigure}
    \begin{subfigure}{0.24\linewidth}
        \centering
        \includegraphics[width=\linewidth]{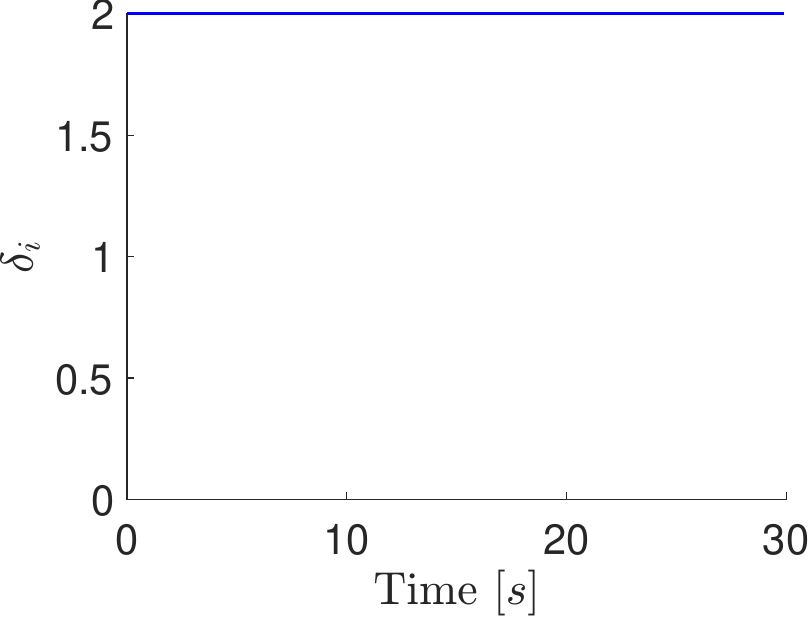}
        \subcaption[]{} \label{fig:eng:del:s}
    \end{subfigure}
    \begin{subfigure}{0.24\linewidth}
        \centering
        \includegraphics[width=\linewidth]{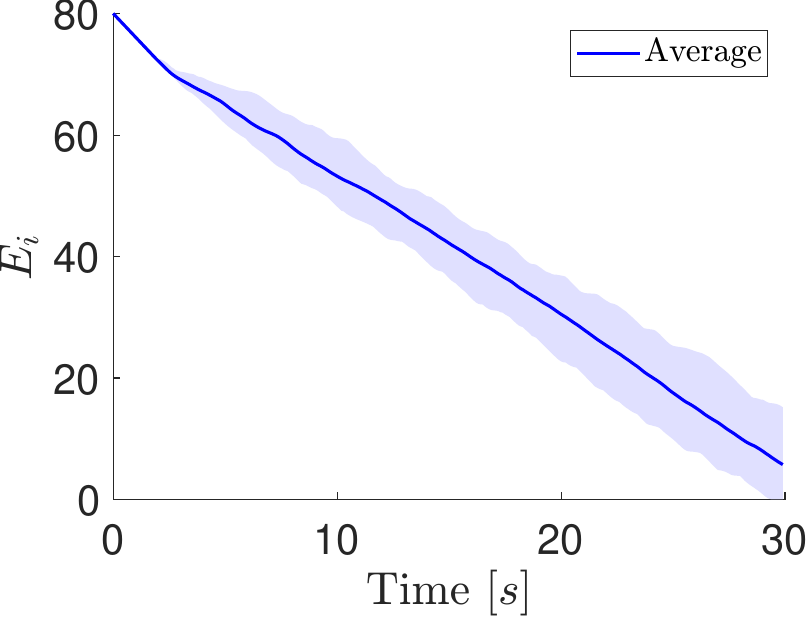}
        \subcaption[]{} \label{fig:eng:E:s}
    \end{subfigure}
    \begin{subfigure}{0.24\linewidth}
        \centering
        \includegraphics[width=\linewidth]{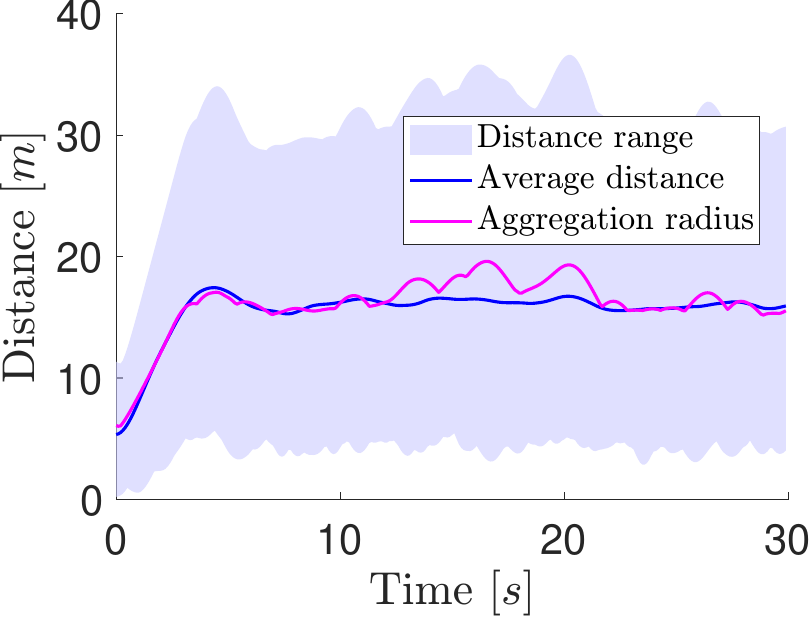}
        \subcaption[]{} \label{fig:eng:d:s}
    \end{subfigure}
    \begin{subfigure}{0.24\linewidth}
        \centering
        \includegraphics[width=\linewidth]{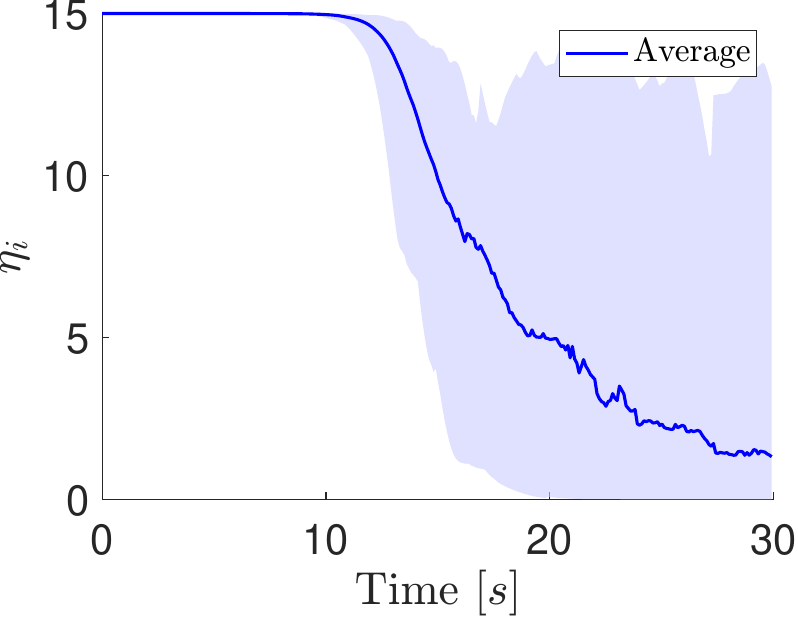}
        \subcaption[]{} \label{fig:eng:eta:ada}
    \end{subfigure}
    \begin{subfigure}{0.24\linewidth}
        \centering
        \includegraphics[width=\linewidth]{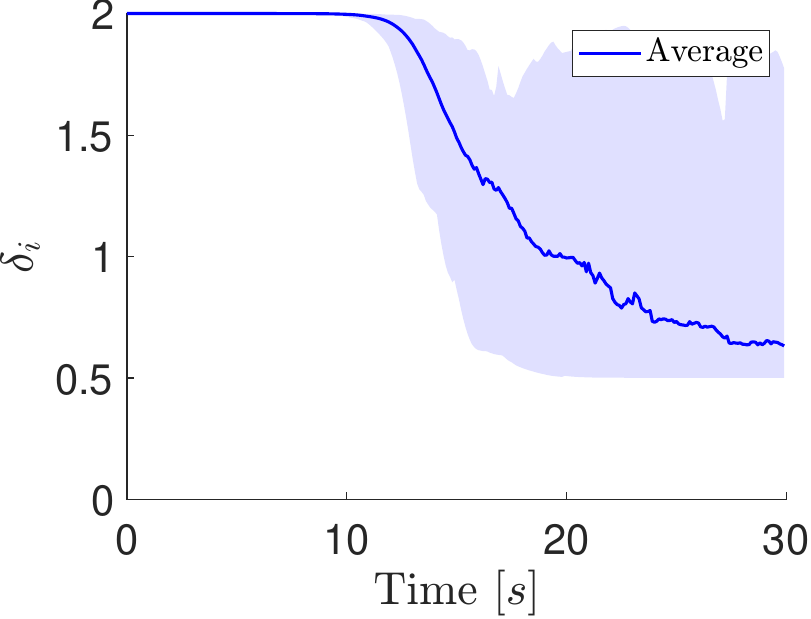}
        \subcaption[]{} \label{fig:eng:del:ada}
    \end{subfigure}
    \begin{subfigure}{0.24\linewidth}
        \centering
        \includegraphics[width=\linewidth]{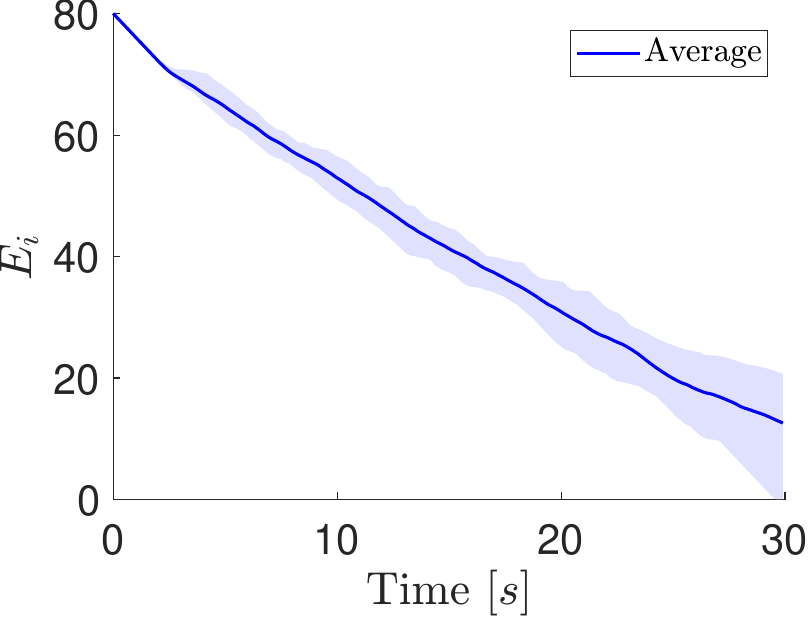}
        \subcaption[]{} \label{fig:eng:E:ada}
    \end{subfigure}
    \begin{subfigure}{0.24\linewidth}
        \centering
        \includegraphics[width=\linewidth]{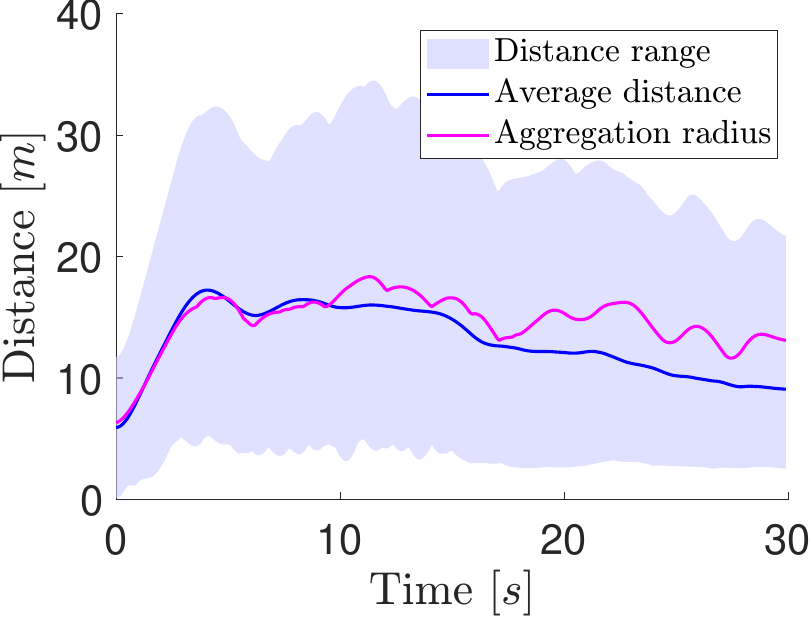}
        \subcaption[]{} \label{fig:eng:d:ada}
    \end{subfigure}
    \caption{Time histories of \(\eta_i\), \(\delta_i\), \(E_i\), relative distances and aggregation radius for \(n=20, E_{i0}=80, c_{i1}=0.15,c_{i2}=0.015,k_{\delta}=k_{\eta}=0.5\). (a)-(c) show the evolution of \(\eta_i\), \(\delta_i\), and \(E_i\) for flocking behavior with a compact formation, where agents maintain strong alignment and close proximity. (e)-(g) depict the corresponding time histories for swarming behavior, where agents explore the environment. (i)-(k) illustrate the adaptive case, where \(\eta_i\) and \(\delta_i\) are dynamically adjusted based on energy levels \(E_i\). The threshold energy level is set to \(E_{\text{th}}=40\). The results demonstrate that flocking conserves energy, swarming expends more energy, and the energy-driven adaptation mechanism efficiently transitions agents from swarming to flocking to optimize energy usage. Additionally, from (d), (h), and (l), collision avoidance and group aggregation are evident.}
    \label{fig:eng}
\end{figure}

\begin{figure*}[htbp]
    \centering
    \begin{subfigure}{0.22\linewidth}
        \centering
        \includegraphics[width=\linewidth]{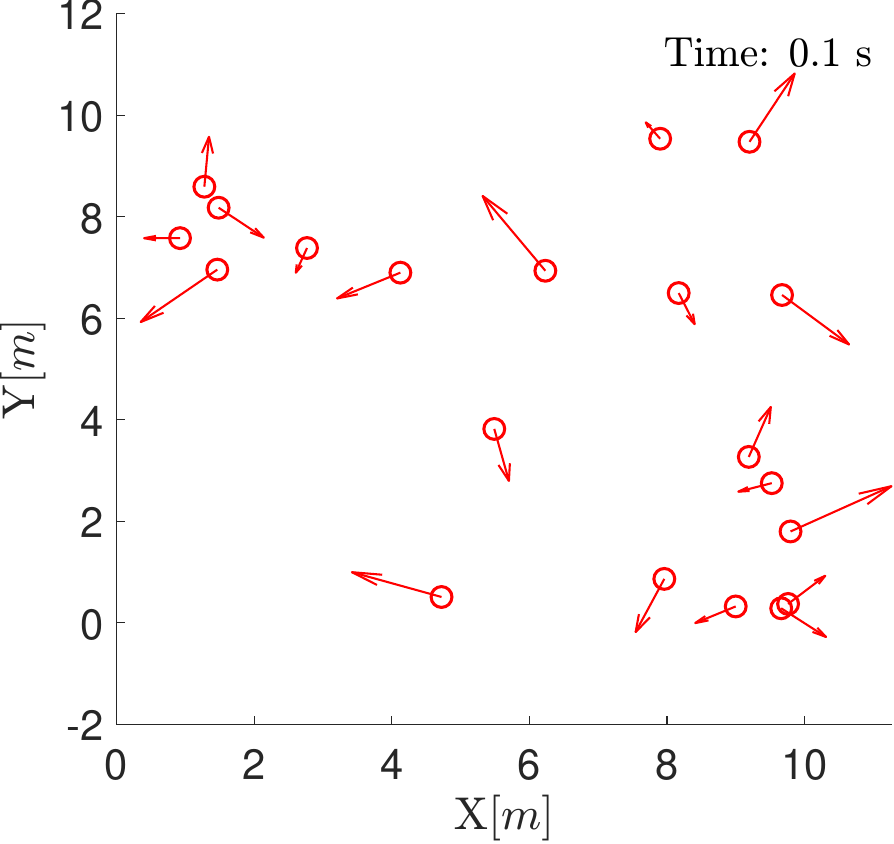}
        \subcaption[]{} \label{fig:cog_mot:001}
    \end{subfigure}
    \begin{subfigure}{0.22\linewidth}
        \centering
        \includegraphics[width=\linewidth]{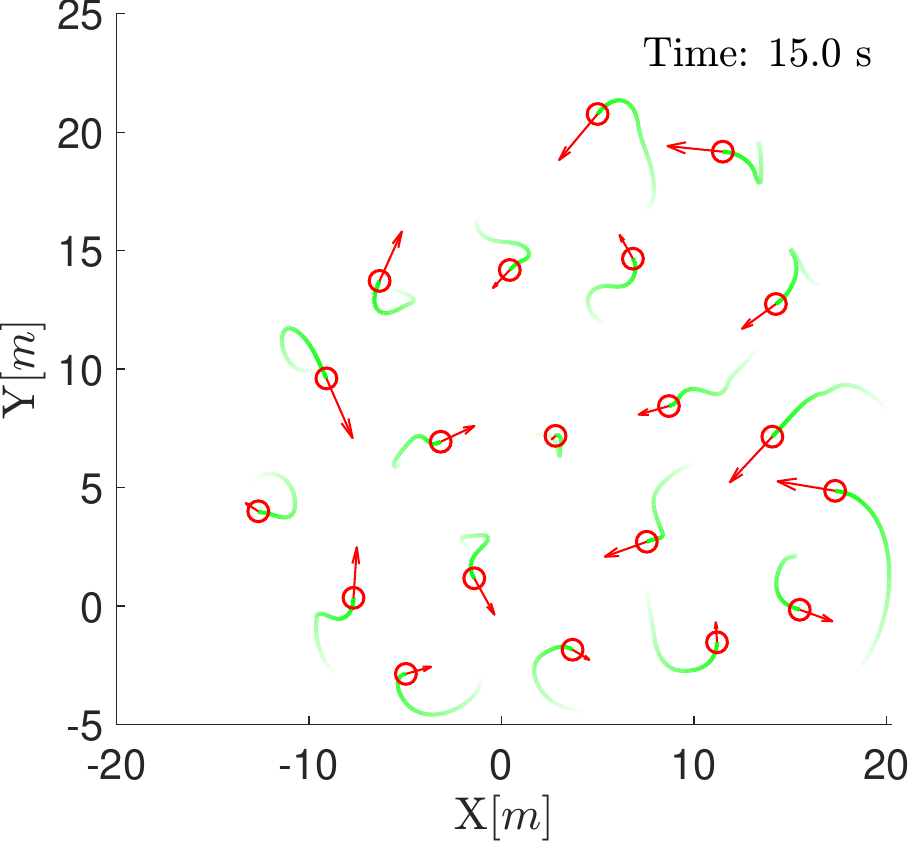}
        \subcaption[]{} \label{fig:cog_mot:140}
    \end{subfigure}
    \begin{subfigure}{0.22\linewidth}
        \centering
        \includegraphics[width=\linewidth]{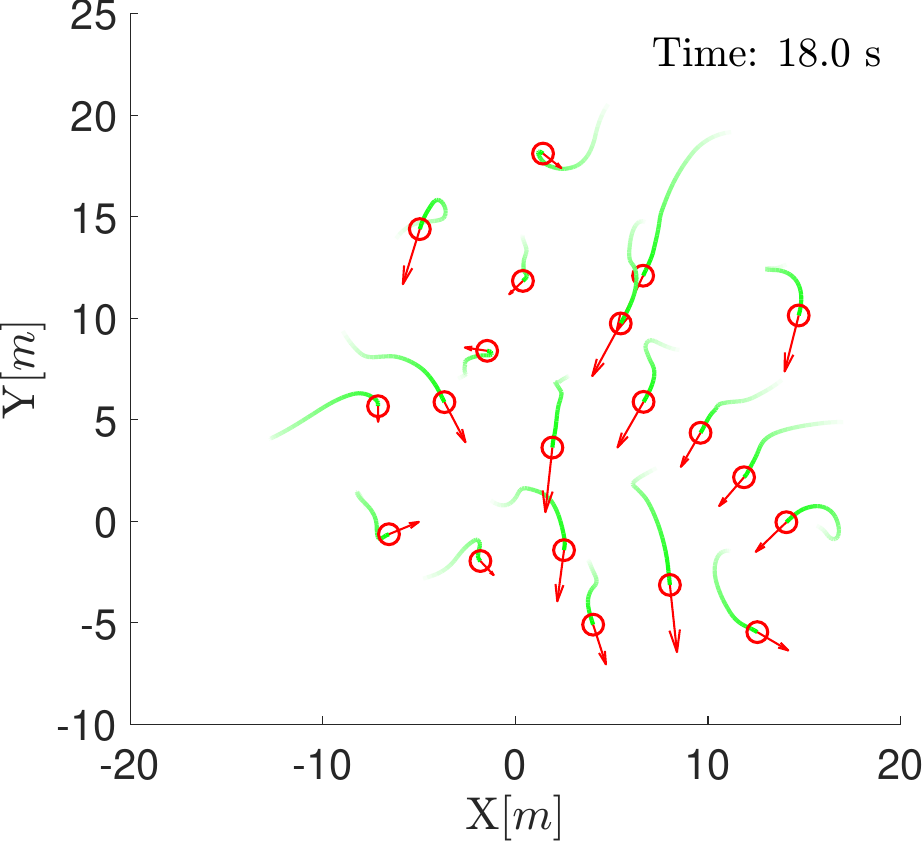}
        \subcaption[]{} \label{fig:cog_mot:250}
    \end{subfigure}
    \begin{subfigure}{0.22\linewidth}
        \centering
        \includegraphics[width=\linewidth]{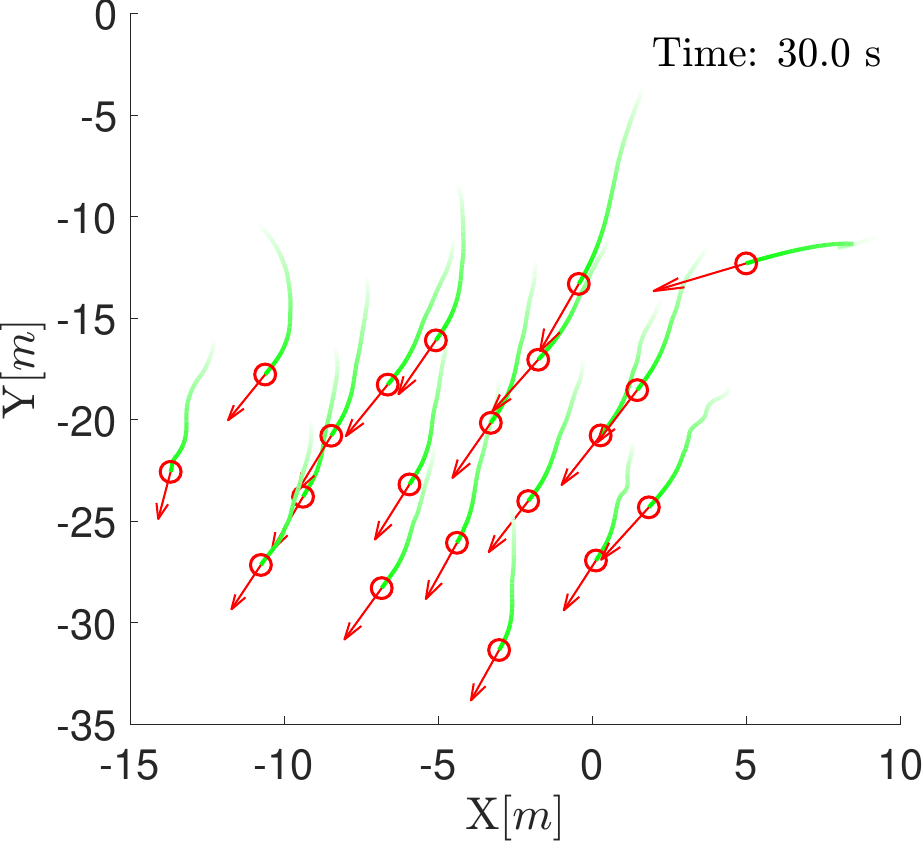}
        \subcaption[]{} \label{fig:cog_mot:300}
    \end{subfigure}
    \caption{Snapshots of collective motion under the energy-driven adaptive mechanism. Agents dynamically adjust their \(\delta_i\) and \(\eta_i\) policies in response to their own and neighbors' energy levels, shaping the emergent collective behavior. Initialized with randomized positions and velocities in (a), the collective behavior transitions to swarming in (b) and evolves into flocking in (c) to (d).}
    \label{fig:cog_mot}
\end{figure*}

By utilizing the adaptive mechanism, agents autonomously determine their actions based on their own energy states and those of their neighbors. An alternative adaptation strategy could involve agents responding to their own energy levels as well as the actions of their neighbors (i.e., \(\delta_j, \eta_j\) for every \(j \in \mathcal{N}_i\)). 

This adaptive mechanism allows the group to exhibit emergent intelligence, such as transitioning between swarming and flocking behaviors, resembling collective cognition observed in animal groups. Moreover, this approach provides a robust decentralized framework for designing robotic multi-agent systems capable of demonstrating adaptive, collective intelligence.

The scalability of the proposed framework is evidenced by its consistent performance across varying population sizes. As demonstrated by simulations, the model reliably reproduces collective behaviors using only local interactions, and these emergent behaviors remain qualitatively robust even in large swarms. Consequently, the model exhibits the desirable scalability properties required for large-scale multi-agent systems.

\section{Conclusion and Future Work}\label{sec:con}

This paper presented a minimal yet expressive mathematical model capable of replicating complex collective behaviors observed in animal groups, such as swarming, vortexing, and flocking. Using simple interaction rules based on inter-agent distance, speed, direction, and local density, the model captures the fundamental mechanisms of alignment, separation, and aggregation. A tunable kinetic offset parameter enables smooth transitions between behaviors, and an energy-aware adaptive mechanism governs cognitive switching. Despite its simplicity, the model produces rich, collision-free dynamics suitable for decentralized, energy-adaptive swarm robotics.

While the framework is designed for lightweight, decentralized computation, real-world deployment poses practical challenges. Sensor noise and communication delays can introduce inaccuracies or outdated information. Physical and environmental constraints, such as bounded actuation and obstacles, further necessitate control saturation schemes to ensure effective low-level control and real-time obstacle avoidance. In addition, future work will focus on extending the model within the RBC framework supported by the E-CARGO ontology, enabling a hybrid architecture that combines continuous behavior modulation with formalized role coordination. Such integration is expected to enhance robustness and scalability in autonomous multi-agent systems.

\section*{Acknowledgments} This work was funded by the Czech Science Foundation (GAČR) under research project no. \(\mathrm{23-07517S}\) and the European Union under the project Robotics and Advanced Industrial Production (reg. no. \(\mathrm{CZ.02.01.01/00/22\_008/0004590}\)).

\end{document}